\newcommand{\curv}{\boldsymbol{\mathrm{d}}}
\newcommand{\sur}{\text{sur}}
\newcommand{\tar}{\text{tar}}
\newcommand{\hum}{\text{hum}}
\title{DALD: Improving Logits-based Detector without Logits from Black-box LLMs}
\author{%
  $\textbf{Cong Zeng}^{1 \ast} \quad 
  \textbf{Shengkun Tang}^1 \thanks{Equal contribution. The code and data are released at \href{https://github.com/cong-zeng/DALD}{https://github.com/cong-zeng/DALD}}\quad \textbf{Xianjun Yang}^2 \quad 
  \textbf{Yuanzhou Chen}^3 $ \\ 
  $\textbf{Yiyou Sun}^4 \quad 
  \textbf{Zhiqiang Xu}^1 \quad 
  \textbf{Yao Li}^5 \quad 
  \textbf{Haifeng Chen}^4 \quad 
  \textbf{Wei Cheng\textsuperscript{\Letter}}^4 \quad \textbf{Dongkuan Xu}^6
  $\\
  $\text{MBZUAI}^1\quad  
  \text{University of California, Santa Barbara}^2$\\  
  $\text{University of California, Los Angeles}^3 \quad \text{NEC Labs America}^4 $\\
  $\text{University of North Carolina, Chapel Hill}^5 \quad \text{NC State University}^6$\\
  \texttt{\{cong.zeng, shengkun.tang, zhiqiang.xu\}@mbzuai.ac.ae} \quad \texttt{xianjunyang@ucsb.edu} \\
  \texttt{\{sunyiyou, haifengchen, weicheng\}@nec-labs.com} \quad
  \texttt{dxu27@ncsu.edu}\\
  \texttt{yuanzhouchen@cs.ucla.edu} \quad 
  \texttt{yaoli@ad.unc.edu} \quad \\
}
\begin{document}
\newcommand{\sysname}{\texttt{DALD}\xspace}

\maketitle

\begin{abstract}
The advent of Large Language Models (LLMs) has revolutionized text generation, producing outputs that closely mimic human writing. This blurring of lines between machine- and human-written text presents new challenges in distinguishing one from the other – a task further complicated by the frequent updates and closed nature of leading proprietary LLMs. Traditional logits-based detection methods leverage surrogate models for identifying LLM-generated content when the exact logits are unavailable from black-box LLMs. However, these methods grapple with the misalignment between the distributions of the surrogate and the often undisclosed target models, leading to performance degradation, particularly with the introduction of new, closed-source models. Furthermore, while current methodologies are generally effective when the source model is identified, they falter in scenarios where the model version remains unknown, or the test set comprises outputs from various source models. To address these limitations, we present \textbf{D}istribution-\textbf{A}ligned \textbf{L}LMs \textbf{D}etection (\sysname), an innovative framework that redefines the state-of-the-art performance in black-box text detection even without logits from source LLMs. \sysname is designed to align the surrogate model's distribution with that of unknown target LLMs, ensuring enhanced detection capability and resilience against rapid model iterations with minimal training investment. By leveraging corpus samples from publicly accessible outputs of advanced models such as ChatGPT, GPT-4, and Claude-3, \sysname fine-tunes surrogate models to synchronize with unknown source model distributions effectively. Our approach performs SOTA in black-box settings on different advanced closed-source and open-source models. The versatility of our method enriches widely adopted zero-shot detection frameworks (DetectGPT, DNA-GPT, Fast-DetectGPT) with a \textit{plug-and-play} enhancement feature. 
Extensive experiments validate that our methodology reliably secures high detection precision for LLM-generated text and effectively detects text from diverse model origins through a singular detector.
Our method is also robust under the revised text attack and non-English texts.





\end{abstract}
\section{Introduction}

Large language models (LLMs) such as ChatGPT\citep{GPT35}, GPT-4\citep{openai2023gpt4}, Llama\citep{touvron2023llama,touvron2023llama2,Llama3} and Claude-3\citep{claude} have profoundly impacted both industrial and academic domains, reshaping productivity across various sectors including news reporting, story writing, and academic research\citep{m2022exploring}. 
Nevertheless, their misuse also raises concerns, particularly regarding the dissemination of fake news\citep{ahmed2021detectingfakenews}, the proliferation of malicious product reviews\citep{adelani2020generatingreviews}, and instances of plagiarism\citep{lee2023languageplagiarize}. Instances of AI-synthesized scientific abstracts deluding scientists\citep{gao2022comparing, Else2023AbstractsWB} have raised doubts about the reliability of scientific discourse. Accurate and reliable machine-generated text detection methods are necessary in order to address these issues\citep{chakraborty2023possibilities,krishna2023paraphrasing,sadasivan2023can,dugan2024raid,lin2024detecting}.

Methods for detecting text generated by Large Language Models are broadly categorized into watermarking\citep{abdelnabi21oakland, kirchenbauer2023watermark,yoo2023robust,christ2023undetectable}, training-based classifiers\citep{GPTZero, verma2023ghostbuster,solaiman2019release, wu2023llmdet,yu2023gpt}, and zero-shot detectors. Watermarking methods discreetly embed identifiable markers within the text output, striving to retain the model's linguistic integrity. However, this tactic is implementable solely by the model provider. Training-based classifiers, while effective, are costly and often lack the agility to adapt to new domains or model updates. 
Our emphasis is on zero-shot detectors that exploit the intrinsic differences between text written by machines and humans, offering the advantage of being generally training-free.

Most zero-shot detectors primarily depend on analyzing model output logits for detection. Notably, DetectGPT\citep{mitchell2023detectgpt} operates on probability divergence based upon principles of perturbation theory, while DNA-GPT\citep{yang2023dna} harnesses reprompting-based probability divergence, and Fast-DetectGPT\citep{bao2023fast} builds on variations in conditional probability distributions. In scenarios requiring the scrutiny of black-box models, these strategies commonly leverage a surrogate model to approximate the behavior of the target model. However, this approach is doubly flawed: firstly, detection efficacy is inextricably linked to a meticulously tailored surrogate model, with different surrogate models often necessary for accurate detection across various proprietary LLMs; secondly, the fleeting nature of LLM updates renders past surrogates, once effective, obsolete against new versions. For instance, our analysis of the performance of Fast-DetectGPT\citep{bao2023fast}, using \texttt{GPT-Neo-2.7B} as a surrogate, against freshly updated closed-source models reveals erratic and predominantly diminishing accuracy, as contextualized in Figure~\ref{fig:different_version}, with particularly stark declines in performance on iterations like \texttt{GPT-3.5-1106}, highlighting the intrinsic limitation of static surrogate models in adapting to LLM progressions.

\begin{figure*}[t]
\vspace{-0.3cm}
\begin{center}
\subfloat[Human Text]{\includegraphics[width=0.5\textwidth]{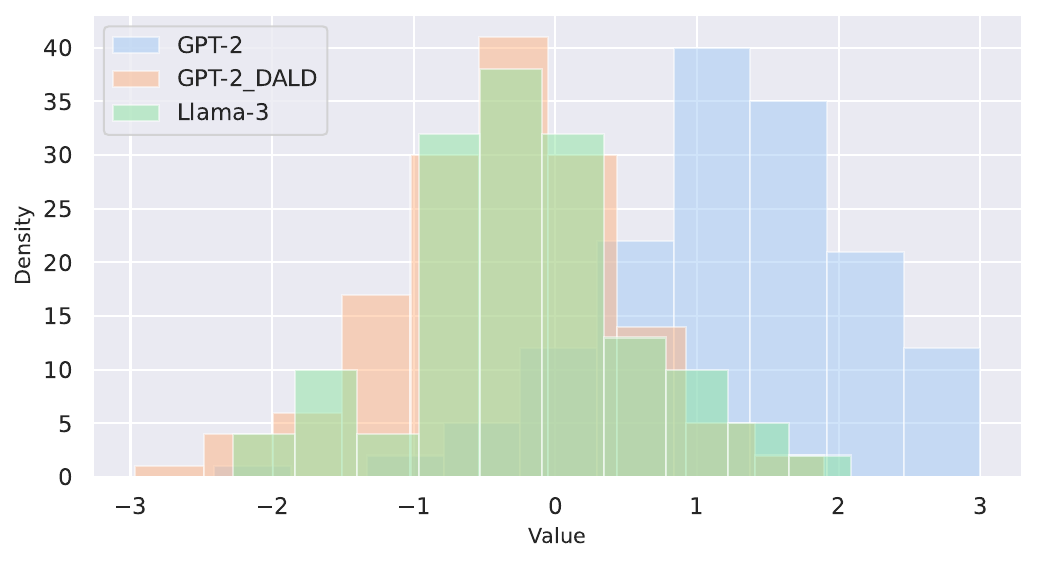}}
\subfloat[Machine Text]{\includegraphics[width=0.5\textwidth]{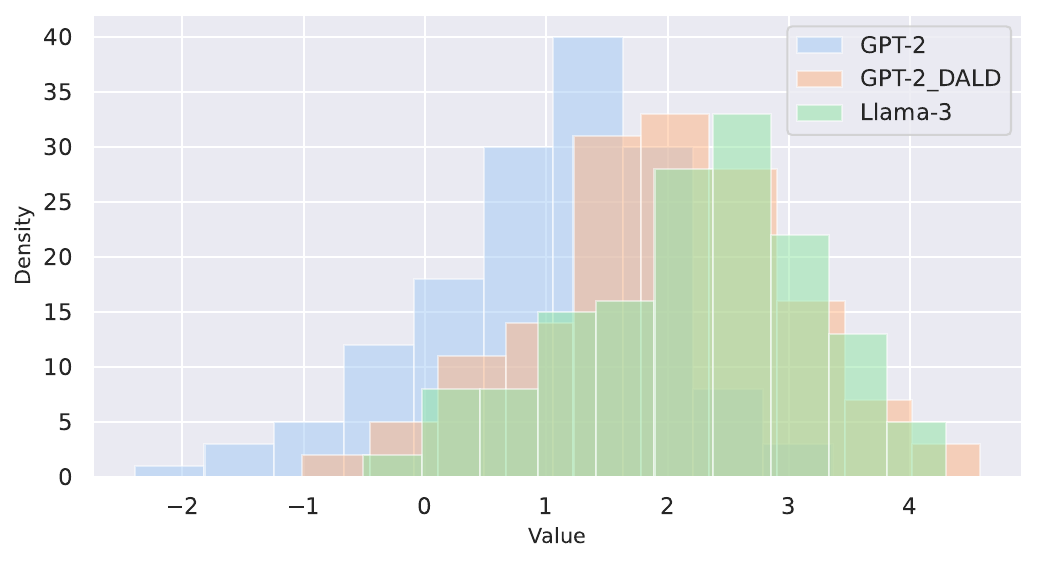}}
\vspace{-2mm}
\caption{The probability curvatures distribution of the surrogate model (GPT-2), the target model (Llama-3) and the model after alignment (GPT-2\_\sysname) on human-written passages and machine-generated passages from PubMed dataset. }
\label{Fig:Motivation}
\end{center}
\vspace{-4mm}
\end{figure*}

\begin{wrapfigure}{r}{0.6\textwidth}
\vspace{-6mm}
\includegraphics[width=.6\textwidth]{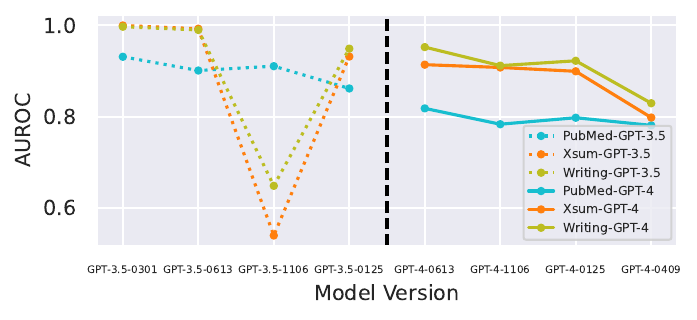}
\vspace{-7mm}
\caption{The performance comparison of a static surrogate model on different target models including ChatGPT (GPT-3.5) and GPT-4. The results are based on Fast-DetectGPT with \texttt{GPT-Neo-2.7B} as the surrogate model.}
\label{fig:different_version}
\vspace{-2mm}
\end{wrapfigure}


In our study, we seek to address the following pivotal inquiries: 1) Can we devise a feasible, cost-effective strategy to refine the probability distribution similarity between the surrogate model and opaque black-box LLMs? 2) Does enhancing the alignment of the surrogate model's probability distribution with that of the target black-box LLM improve detection outcomes for current logits-based detection methods? 3) Is it attainable to develop a universal detection model capable of adapting swiftly to updates across various target LLMs? 
Addressing the first question, our findings, as demonstrated in Figure ~\ref{Fig:Motivation}, reveal that our optimized surrogate model (GPT-2\_\sysname) mirrors the distribution of the target model more closely, in contrast to the original surrogate model's significantly divergent distribution from the target model.

In this paper, we introduce an innovative yet straightforward distribution-aligned framework for black-box LLM detection, dubbed \sysname (\textbf{\ul{D}}istribution-\textbf{\ul{A}}ligned \textbf{\ul{L}}LM \textbf{\ul{D}}etection). Our methodology focuses on synchronizing the surrogate model's distribution with the proprietary target model's distribution. Concretely, we accumulate a compact dataset (<10K samples) from the publicly shared outputs of leading models and subsequently fine-tune our surrogate model using this dataset to better approximate the target model's distribution. We provide the theoretical analysis of the surrogate model distribution alignment in \textbf{Appendix~\ref{sec:theory}}. Our methodology builds upon the following observation:
\begin{quote}
\it{
In logits-based detection methods, a surrogate model that closely mirrors the probability distribution curves of the target black-box LLM is instrumental in enhancing detection accuracy.}
\end{quote}

We posit that this observed effect stems from the foundational assumptions inherent in logits-based detectors and proceed to examine the ramifications of this postulate in tackling the third question.




To sum up, our contributions are as follows:
\begin{itemize}[itemsep=2pt,topsep=0pt,parsep=0pt]

\item The introduction of \sysname, a framework that significantly improves the performance of surrogate models in detecting LLM-generated text generated by both closed-source and open-source models.

\item \sysname's unique ability to enhance detection without reliance on knowledge of the source model – a game-changer in a domain where the source is often unknown.

\item The capability of a single detector, enabled by \sysname, to accurately identify text from varying sources, democratizing detection across diverse LLM outputs.

\item \sysname's agility in keeping pace with rapid updates of LLMs, ensuring the latest models fall within its detection capabilities without extensive retraining.
    

\end{itemize}

\section{Related Work}

\paragraph{Detection of LLMs-Generated Text.}

The burgeoning capabilities of advanced large language models (LLMs) underscore the imperative for robust methodologies aimed at detecting these models. Specifically, the detection is to distinguish whether a given text originates from a language model on the condition that the model is known (White-box)\citep{mitchell2023detectgpt} or unknown (Black-box)\citep{yang2023survey}. 
The earlier work focused on feature-based methods, like\citep{gehrmann2019gltr,grechnikov2009detection,badaskar2008identifying,deng2023efficient}. While in the era of LLMs, the training-based methods\citep{gehrmann2019gltr,wang2023seqxgpt,li2023origin,guo2023authentigpt} are aroused to counter with LLMs's strong ability to produce high-quality text. They usually involve training a binary classifier using text generated by AI or humans. Besides, zero-shot detectors leverage the inherent statistical feature differences between LLMs and human-generated text without requiring training, including probability curvature (DetectGPT\citep{mitchell2023detectgpt}), N-gram divergence (DNA-GPT\citep{yang2023dna}), and conditional probability curvature (Fast-DetectGPT\citep{bao2023fast}), the editing distance of the output\citep{mao2024raidar}, and style representations\citep{soto2024fewshot}, enhancing their ability to adapt to new data distributions and source models. 

\paragraph{Black-box Detection.}
Given the proprietary nature of the latest LLMs\citep{GPT35,openai2023gpt4,claude}, there is a critical need for effective black-box detection methods. Present techniques falter when direct access to the source model is restricted. The training-based methods, like OpenAI text classifier\citep{AITextClassifier}, GPTZero\citep{GPTZero}, G3detector\citep{zhan2023g3detector}, and GPT-Sentinel\citep{chen2023gptsentinel} usually closely adhere to the specific distributions of text domains and source models during training, thereby lacking generalization ability and robustness on model updates. For zero-shot methods \citep{hans2024spotting,mao2024raidar,mitchell2023detectgpt,yang2023dna,bao2023fast,su2023detectllm,venkatraman2024gptwho,deng2023efficient,shi2024words,krishna2023paraphrasing,mireshghallah2024smaller,tulchinskii2023intrinsic,yang2023zero} in the black-box detection settings, they usually rely on a surrogate model for scoring. However, the efficacy of these surrogate models often falls short compared to white-box detection, where access to the source model is available. Moreover, these detection frameworks suffer from diminished accuracy when language models undergo updates\citep{nicks2024language}, which intrinsically evolve through exposure to varied datasets and human input\citep{yang2023survey}. This study presents an innovative black-box detection method for LLM-generated text, greatly enhancing surrogate model performance while adeptly accommodating the rapid evolution of LLMs. Our approach diverges from conventional training-intensive techniques by requiring only a minimal dataset for effective training.

\section{Method}
\begin{figure*}[t]
\vspace{-0.3cm}
\begin{center}
\includegraphics[width=0.85\textwidth]{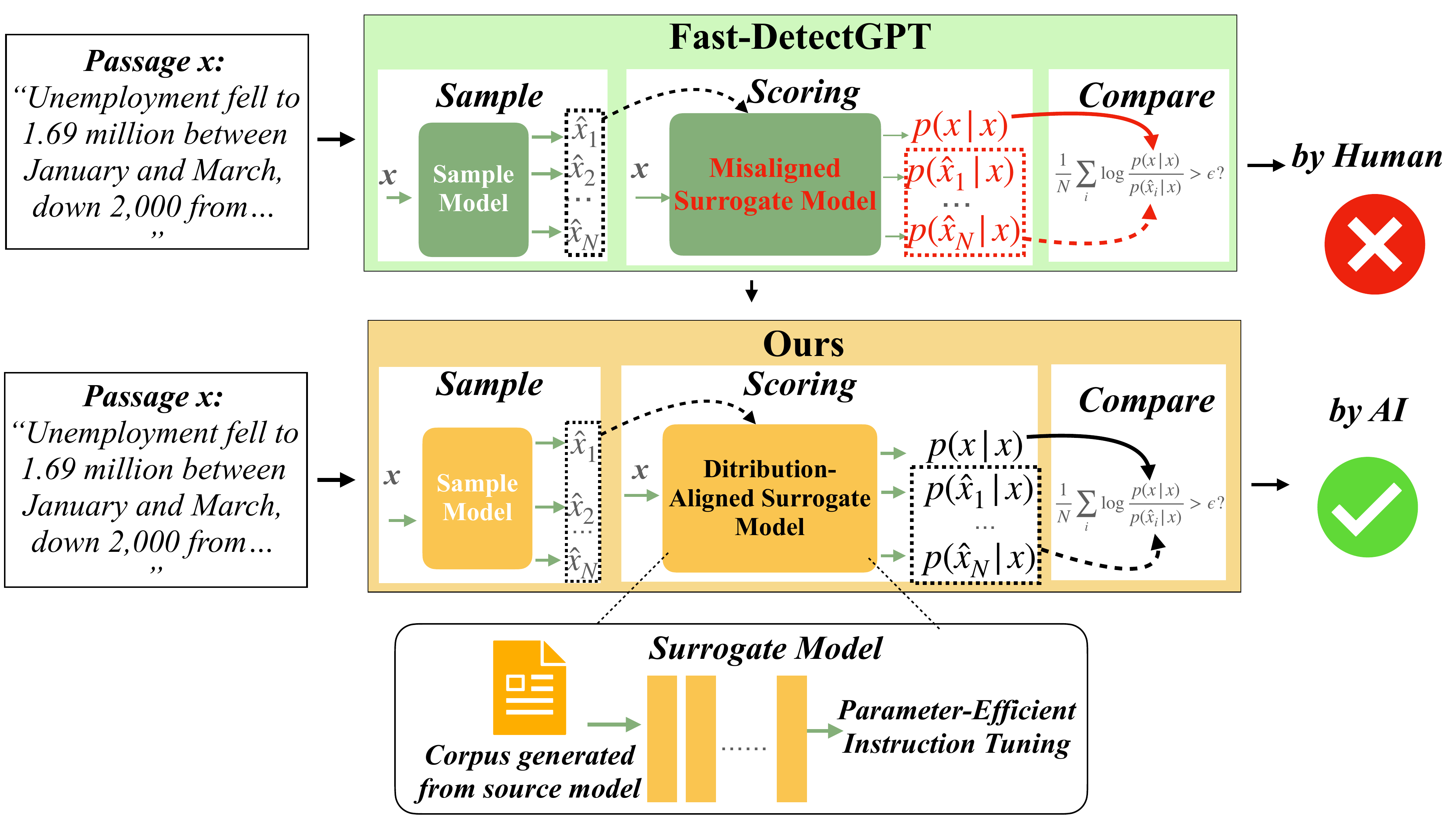}
\end{center}
\vspace{-3mm}
\caption{An overview of our proposed \sysname framework. Our method aligns the distribution of the surrogate model and the target model.
}
\vspace{-0.2cm}
\label{Fig:Architecture}
\end{figure*}

\subsection{Task and Settings}



Our task is to detect whether the given input passage $x$ = $[x_1, ..., x_L]$ ($L$ is the
sequence length) is produced by an AI model $f_{\tar}$ or a human, which can be considered as a binary classification task. 
Typically, there are two different task settings for LLM detection, namely white-box and black-box detection. In the black-box setting, we only have access to the generated text, treating the language model as a ``black box'' where we input text and receive output without knowing the internal workings or probabilities. In the white-box setting, we have additional information about the model, specifically the output probabilities $p(x_{l} | x_{[1:l-1]})$ for each token at each position $l$ in the text. 
However, in the practical scenario, it is usually difficult to get access to the source model, especially widespread but closed-source models such as ChatGPT, GPT-4 and Claude-3. Therefore, we focus on improving the black-box detection without any access to the source model logits in our setting.



\subsection{Logits-based Detection Methods}

Logits-based LLM detection methods compute a metric by discrepancy gap hypothesis of humans and machines for classification. For example, based on the observation that the LLM-generated text occupies negative curvature regions of the model’s log probability function, DetectGPT\citep{mitchell2023detectgpt} proposes to utilize the source
model for scoring, which refers to the white-box settings. Following DetectGPT, Fast-DetectGPT\citep{bao2023fast} replaces the perturbations-based sampling method with conditional probability sampling to accelerate the inference speed and improve the detection performance. Formally, given an input passage $x$ and the target source model $p_{\theta}$, Fast-DetectGPT chooses another accessible but open-sourced model $s_\theta$ for scoring, which is called the surrogate model. Together with a sampling model $q_\varphi$, Fast-DetectGPT defines the conditional probability $p(\hat{x}|x)$ as
\begin{equation}
    p(\hat{x}|x) =  \prod_{l} s_\theta(\hat{x}_{l}|x_{<l}),
\end{equation}
where $\hat{x}$ is a sample generated by the sampling model $q_\varphi$.
The detection process typically consists of a three-stage procedure. The sampling step uses a sampling model to generate alternative samples $\hat{x}$ conditioned on $x$ based on the next token prediction. Following the sampling step, the process proceeds by calculating the conditional score. The conditional probability can be obtained through a single forward pass of the scoring model, utilizing $x$ as the input. All the conditional probabilities of samples can be obtained in the same predictive distribution. Finally, compare conditional probabilities of the passage and samples to calculate the curvature.


\paragraph{The Challenge.} In black-box settings, selecting the appropriate surrogate model in black-box settings is crucial for achieving accurate and reliable results since there is distribution misalignment between the surrogate model and the target model. Poorly chosen surrogate models may lead to bad results and a lack of explainability. Besides, with the closed-source trend of newly published LLM models, the performance can drop significantly when applied to new and advanced models, which can limit their utility and effectiveness. How to obtain a surrogate model that can fit the target models, especially closed-source models is a challenging task in black-box settings.

\subsection{Distribution-Aligned Black-Box Detection}


\paragraph{Misalignment of Surrogate Model and Target Model.}Our method is motivated by the observation that there is a distribution gap between the given surrogate model and the target source model as shown in Figure~\ref{Fig:Motivation}. 
The goal of our method is to obtain a surrogate model to approach the distribution of the target model by utilizing the texts generated by the target model.
To achieve that, we propose a novel and simple framework to train a distribution-aligned surrogate model, which outperforms SOTA black-box methods with a small-size dataset (<10K). The architecture of our method is shown in Figure~\ref{Fig:Architecture}. Our framework consists of two steps in total. The first step is to collect small-size training data generated by the source model from the publicly shared outputs. With the training dataset, we finetune the surrogate model to align the distribution of the source model.

\paragraph{Alignment Data Collection.} Given the target model $f_{tar}$ and surrogate model $f_{sur}$, in order to align the distribution of the surrogate model and target model, we collect a small-size dataset $\cS = \{ (P_i, X_i) \} _{i = 1} ^{N}$ for a specific target model, referred as alignment dataset, where $N$ refers to the number of collected samples,  $P_i$ is the text for prompting and $X_i$ is the corpus generated by the target model $f_{tar}$. The model version of the collected data should be exactly the same as the target model, especially for closed-source models such as ChatGPT and GPT-4. For example, if the test data is generated by \texttt{GPT-4-0613}, then all of the texts in the dataset $\cS$ should also be generated by \texttt{GPT-4-0613}. We utilize the collected dataset $\cS$ to finetune the surrogate model $f_{sur}$ to align the distribution with target model $f_{tar}$. 

\paragraph{Distribution-Aligned Surrogate Model Training.} 
As illustrated in Figure~\ref{Fig:Architecture}, our approach expands the scoring step of previous logits-based methods such as Fast-DetectGPT by incorporating an additional surrogate model finetuning step. Given the surrogate model $f_{sur}$, we construct the Low-Rank Adaptation (LoRA)\citep{hu2021lora} of surrogate model $f_{sur}$ for faster and more stable fine-tuning. The LoRA model $f_{sur + \Theta}$ is trained with a collected dataset while the parameter of the original surrogate model $f_{sur}$ is frozen.
With collected dataset $\cS = \{ (P_i, X_i) \} _{i = 1} ^{K_1
}$ where $K_1$ is the number of samples, we concatenate the prompt and generated text as $y = [ P, X ]$. The model $f_{sur + \Theta}$ utilizes the tokenized $x$ as input and is trained in a self-supervised learning manner. The training objective of our fine-tuning is:
\begin{equation}
    \max_{\Theta} \sum_{y=[P, X] \in \cS} \sum_{l=l(P)+1}^{l(P) + l(X)} \log p(y_l | y_{< l}; {sur + \Theta}),
\end{equation}
where $l(X)$ denotes the length of a passage $X$, and $y_l$ is the next token to be predicted. In order to disable the influence of the prompt, we follow typical instruction tuning to mask the gradient of the prompt. As shown in Figure~\ref{Fig:Motivation}, after training, the misaligned model generates a similar distribution as the target source model, demonstrating the effectiveness of our method.  Following that, the distribution-aligned surrogate model can be utilized to compute the logits for downstream decisions.

Under an assumption on the sample complexity of fine-tuning with the above loss function, 
we theoretically demonstrate the effectiveness of fine-tuning on approximating the target model in the following theorem, using \textbf{conditional probability curvature} from Fast-DetectGPT \citep{bao2023fast}: 
\begin{theorem} \label{thm: fine-tuning} 
With fine-tuning sample size $K_1$ = $\Omega (\text{poly} (\Delta / L) )$, with probability $1 - \delta$, we have that given a text segment $X$ with length $l$, the conditional probability curvature between the two models is bounded by
$$ \big| \curv (X, f_{\sur}) - \curv (X, f_{\tar}) \big| \leq \Delta / 3. $$
\end{theorem}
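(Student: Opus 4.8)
The plan is to bound the conditional probability curvature difference by first controlling how well the fine-tuned surrogate $f_{\sur}$ approximates the target $f_{\tar}$ in a distributional sense, and then propagating that approximation error through the definition of $\curv(\cdot)$. First I would make precise the sample-complexity assumption alluded to before the theorem: fine-tuning with the per-token log-likelihood objective on $K_1 = \Omega(\mathrm{poly}(\Delta/L))$ samples drawn from $f_{\tar}$ yields, with probability $1-\delta$, a uniform (or expected) bound on the next-token conditional distributions, e.g. $\big| \log s_\theta(\cdot \mid y_{<l}) - \log p_\theta(\cdot \mid y_{<l}) \big|$ small on the support of interest, or equivalently a small total-variation / KL gap per conditional. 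This is essentially a statistical-learning statement: the empirical cross-entropy minimizer over the LoRA-parameterized family converges to the population minimizer, which — assuming $f_{\tar}$ lies in (or near) the hypothesis class — is $f_{\tar}$ itself; a uniform-convergence / Rademacher-complexity argument with the polynomial sample size gives the stated high-probability bound, with the $L$-dependence entering because a length-$l$ passage aggregates up to $l \leq L$ conditionals.

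Next I would recall the explicit form of the conditional probability curvature $\curv(X, f)$ from Fast-DetectGPT: it is a normalized discrepancy of the form $\big(\log p_f(X \mid X) - \tilde\mu_f\big)/\tilde\sigma_f$, where $\log p_f(X\mid X) = \sum_l \log f(x_l \mid x_{<l})$, and $\tilde\mu_f$, $\tilde\sigma_f$ are the mean and standard deviation of $\log p_f(\hat X \mid X)$ over samples $\hat X$ drawn from the (fixed) sampling model $q_\varphi$ conditioned on $X$. The key observation is that every ingredient of $\curv(X,f)$ is a (Lipschitz) function of the collection of conditional log-probabilities $\{\log f(\cdot \mid x_{<l})\}_{l \le l}$ evaluated at $X$ and at the sampled $\hat X$'s. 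So I would (i) bound $|\log p_{\sur}(X\mid X) - \log p_{\tar}(X\mid X)| \le \sum_l |\log s_\theta(x_l\mid x_{<l}) - \log p_\theta(x_l\mid x_{<l})| \le l\cdot\varepsilon$, where $\varepsilon$ is the per-token bound from the first step; (ii) similarly bound the perturbation of $\tilde\mu_f$ and $\tilde\sigma_f$, using that the sampling model $q_\varphi$ is the same for both and that variance/standard deviation are Lipschitz in the underlying log-probabilities over a bounded range; and (iii) combine via the quotient rule, using a lower bound on $\tilde\sigma_{\tar}$ (a mild nondegeneracy assumption, or absorb it into the polynomial) to control division. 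Choosing $\varepsilon$ a sufficiently small polynomial fraction of $\Delta/L$ — which is exactly what the sample size $K_1 = \Omega(\mathrm{poly}(\Delta/L))$ buys — makes the aggregate bound at most $\Delta/3$.

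The main obstacle I expect is step (i) of the statistical part combined with the quotient-sensitivity analysis: getting a clean, uniform per-token log-probability bound (rather than merely an averaged KL bound) may require either a boundedness/lower-bound assumption on the token probabilities under $f_{\tar}$ (so that $\log$ is Lipschitz and cross-entropy convergence transfers to pointwise log-ratio control) or a more careful argument that only the aggregate $\sum_l$ needs to be controlled in expectation, with a high-probability bound via concentration over the $l \le L$ terms and the finitely many samples $\hat X$. Handling the normalization by $\tilde\sigma$ is the other delicate point: if $\tilde\sigma_{\tar}$ can be small, the curvature is sensitive, so I would either state a nondegeneracy hypothesis explicitly or note that it is implied by the setup, and then the rest is routine propagation of errors. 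I would organize the write-up as: (1) state the fine-tuning approximation lemma with its sample complexity; (2) a Lipschitz lemma for $\curv$ in terms of conditional log-probabilities; (3) combine, tracking the $L$ and $\delta$ dependence, to conclude the $\Delta/3$ bound.
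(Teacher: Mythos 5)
Your proposal is correct in outline and shares the paper's overall skeleton (fine-tuning assumption $\Rightarrow$ distributional closeness of next-token conditionals $\Rightarrow$ error propagation through $\curv$), but the middle steps are genuinely different. The paper starts from the observation that the near-optimal expected cross-entropy loss directly bounds $\text{KL}(p_{\tar}\,\|\,p_{\sur})\leq\epsilon$ per position, converts this to total variation via Pinsker's inequality, and then controls the two ingredients of the curvature separately: the $\log p_f(X|X)$ term via $d_{\text{TV}}/C$ with $C$ the minimum next-token probability (the same nondegeneracy assumption you anticipate needing for your per-token log-Lipschitz step), and the $\tilde\mu$ term by noting that in the paper's formulation the samples $\tilde X$ are drawn from $p_f(\cdot\,|\,X)$ \emph{itself} rather than from a fixed external sampler $q_\varphi$, so that $\tilde\mu$ collapses into a sum of entropies $\sum_l \cH(p_f(X_{[:l-1]}))$ and the difference is bounded by the entropy--TV continuity inequality $d_{\text{TV}}\log(W-1)+h(d_{\text{TV}})$ of Sason. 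Your convention of a fixed sampling model changes which lemma is needed here (the integrand changes but not the sampling measure), and your route via a uniform per-token log-probability bound plus Lipschitz propagation is an equally legitimate, arguably more elementary alternative; the KL-then-Pinsker route buys the fact that the averaged cross-entropy bound is exactly what the loss assumption delivers, with no need to upgrade to a pointwise statement. One point where you are actually more careful than the paper: the paper never bounds $|\tilde\sigma_{\sur}-\tilde\sigma_{\tar}|$ nor lower-bounds $\tilde\sigma$, silently absorbing the normalization into the final constant, whereas you correctly flag that a nondegeneracy hypothesis on $\tilde\sigma_{\tar}$ is needed for the quotient step; if you write this up, state that hypothesis explicitly, as it is a real gap in the published argument.
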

A detailed proof of this theorem can be found in Appendix~\ref{sec:theory}. 
Given the hypothesis that there is a positive gap $\Delta$ in conditional probability curvature between human-generated text and machine-generated text, the corresponding gap calculated from the surrogate model will still be significant. 






\section{Experiments}
\subsection{Setups}

\paragraph{Datasets \& Evaluation Metric.} We follow Fast-DetectGPT using four datasets in the black-box detection evaluation, including Xsum\citep{narayan-etal-2018-dont}, WritingPrompts\citep{fan-etal-2018-hierarchical}, WMT-2016\citep{bojar-EtAl:2016:WMT1} and PubMedQA\citep{jin-etal-2019-pubmedqa}. We randomly sample 150 examples of each dataset as human-written texts. Then based on the samples, we prompt the target closed-source models by API to generate the corresponding texts using the 30 tokens of human-written text as the machine-generated text. For text diversity and quality, we employ a temperature of 0.8 which is the same setting in Fast-DetectGPT. For the training dataset, we collect the corpus from the publicly shared outputs of leading models.  Following previous works\citep{bao2023fast}, we compute the accuracy in the area under the receiver operating characteristic (AUROC) to evaluate the performance of all methods. We also provide the area under the precision and recall (AUPR) in Appendix~\ref{sec:metric}.

\paragraph{Source \& Surrogate Models.} To validate our idea in black-box detection, we include the most advanced closed-source LLMs from OpenAI: ChatGPT, GPT-4, and Claude-3 from Anthropic. Since these models keep being updated by their owner company, we use the version \texttt{GPT-3.5-turbo-0301} for ChatGPT, \texttt{GPT-4-0613} for GPT-4, and \texttt{claude-3-opus-20240229} for Claude-3 if not specified. We utilize \texttt{Llama2-7B} as the surrogate model in Table~\ref{tab:main_table}. Note that our method can be adapted to any open-source model. Therefore, we provide our results on other surrogate models in Table ~\ref{tab:different_surrogate_model}.


\paragraph{Baseline Methods.}
We consider training-based baselines and zero-shot baselines. We mainly consider three strong baselines for black-box detection: Detect-GPT\citep{mitchell2023detectgpt}, DNA-GPT\citep{yang2023dna} and Fast-DetectGPT\citep{bao2023fast}. Detect-GPT. Detect-GPT applies \texttt{T5-3B}\citep{raffel2020exploring} as a sampling model to generate perturbed texts and utilizes \texttt{GPT-Neo-2.7B}\citep{gpt-neo} as a surrogate model to compute the probability curvature of perturbed texts. After that, perturbation discrepancy is obtained to determine whether the given text is generated by AI or humans. Fast-DetectGPT uses \texttt{GPT-J-6B}\citep{gpt-j} and \texttt{GPT-Neo-2.7B} as the sampling model and surrogate model respectively to compute the conditional probability curvature. Finally, DNA-GPT utilizes \texttt{GPT-Neo-2.7B} as a surrogate model to regenerate the texts for metric computation. Details about other baselines are described in Appendix~\ref{sec:Detail}.



\paragraph{Implementation Details.}
We collect the data of ChatGPT and GPT-4 from WildChat\citep{zhao2024wildchat} while the data of Claude-3 is generated by calling Claude-3 using the prompts from WildChat. During training, we randomly choose 5K prompts and responses. We applied the instruction tuning to model training to ignore the human-written prompts.
For surrogate model training, we apply parameter-efficient fine-tuning (PEFT) via  Low-Rank Adaptation (LoRA). We do not tune the hyperparameters carefully. Therefore, more details about training parameters can be found in the Appendix~\ref{sec:Detail}. For training time, our method finetunes \texttt{Llama-2-7B} with 5K samples on 4 A6000.


\begin{table}[]
\vspace{-0.3cm}
    \centering
    \caption{Detection accuracy comparison on three source models ChatGPT (\texttt{GPT-3.5-Turbo-0301}), GPT-4 (\texttt{GPT-4-0613}) and Claude-3 (\texttt{claude-3-opus-20240229}). Our method surpasses previous methods on all passages generated from different source models.}
    \begin{tabular*}{14cm}{l|c|ccc|ccc}
    \toprule
    \multirow{2}{*}{\textbf{Method}} & \textbf{ChatGPT} & \multicolumn{3}{c|}{\textbf{GPT-4}}& \multicolumn{3}{c}{\textbf{Claude-3}} \\
    & \textbf{PubMed} & \textbf{PubMed} & \textbf{XSum}  & \textbf{Writing} & \textbf{PubMed} & \textbf{XSum} & \textbf{Writing}  \\ 
    
    \hline
    RoBERTa-base &0.6298 &0.5327 &0.7475 &0.5186 &0.4961 &0.8564 &0.6707 \\
    RoBERTa-large &0.7168 &0.5898 &0.6830 &0.3800 &0.5334 &0.7888 &0.5178 \\
    \hline
    Likelihood &0.8924 &0.8103 &0.8096 &0.8528 &0.8543 & 0.9383 &0.9542 \\
    Entropy &0.2877 &0.3036 &0.4451 &0.3545 &0.2940 &0.3856 &0.1844 \\
    LogRank &0.8847 &0.7996 &0.8041 &0.8303 &0.8481 &0.9420 &0.9437 \\
    LRR &0.7793 &0.6860 &0.7405 &0.7212 &0.7468 &0.8989 &0.8761 \\
    NPR &0.6917 &0.5950 &0.6726 &0.8192 &0.6610 &0.8584  &0.9167 \\
    Detect-GPT & 0.6626  & 0.5806   & 0.6940   & 0.8270  &  0.6562                    & 0.8652                     & 0.9232                     \\
    DNA-GPT    & 0.7788  & 0.7171  & 0.7100  & 0.7849 & 0.7442  & 0.9410   & 0.9471               \\
    Fast-DetectGPT & 0.9309    & 0.8179   & 0.9136  & 0.9521     & 0.8900     & 0.9828    & 0.9445     \\ 
    \hline
     \sysname (Ours)& \textbf{0.9853}  & \textbf{0.9785} & \textbf{0.9954} & \textbf{0.9980}  & \textbf{0.9630} & \textbf{0.9867} & \textbf{0.9981}  \\
    \bottomrule
    \end{tabular*}
    \label{tab:main_table}
\end{table}
\subsection{Main Results}

\paragraph{Black-Box Machine-Generated Text Detection.} We compare our method with mainstream black-box LLM detection methods in Table~\ref{tab:main_table}. The detection accuracy shows that our method achieves the best performance compared with other methods including Detect-GPT, DNA-GPT and Fast-DetectGPT. 
Moreover, it is noteworthy that our method obtains more than 99\% AUROC on XSum-GPT-4, Writing-GPT-4 and Writing-Claude-3. As shown in Figure \ref{fig:fpr_tpr_Xsum_Writing_Pubmed}, the ROC curve for \sysname achieves the highest TPR at the same FPR across all three datasets compared with DNA-GPT and Fast-DetectGPT, indicating superior performance. Besides, comparing the performance across different datasets in previous methods, we can observe that the Writing dataset is much easier to detect while PubMed is the hardest. The possible reason is that PubMed is a much more medical-specific dataset while the related corpus is not comprised of surrogate model pre-training. However, our method gains significant improvement upon Fast-DetectGPT. For example, on PubMed-GPT-4, Fast-DetectGPT only obtains 0.8179 AUROC while our method achieves 0.9785, demonstrating the effectiveness of distribution alignment in our method. 

\begin{figure*}[t]
\begin{center}
\subfloat[XSum]{\includegraphics[width=0.33\textwidth]{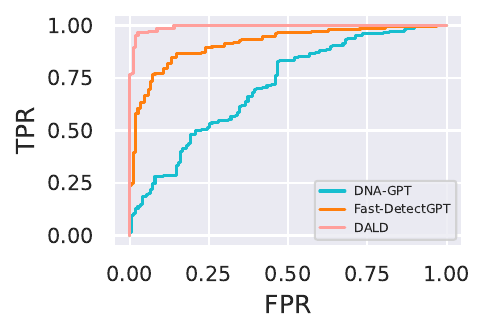}}
\subfloat[Writing]{\includegraphics[width=0.33\textwidth]{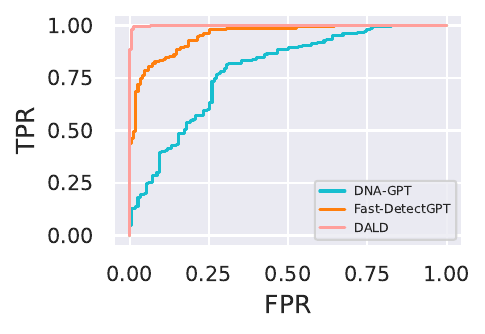}}
\subfloat[PubMed]{\includegraphics[width=0.33\textwidth]{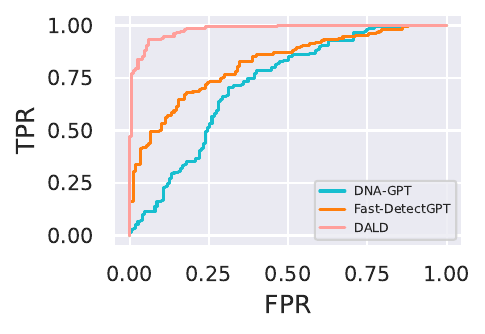}}
\end{center}
\caption{The FPR-TPR curve of different methods on XSum, Writing and PubMed dataset. The results show that our method achieves highest score at low FPR compared with DNA-GPT and Fast-DetectGPT.}
\label{fig:fpr_tpr_Xsum_Writing_Pubmed}
\end{figure*}

\begin{figure*}[t]
\begin{center}
\subfloat[XSum]{\includegraphics[width=0.33\textwidth]{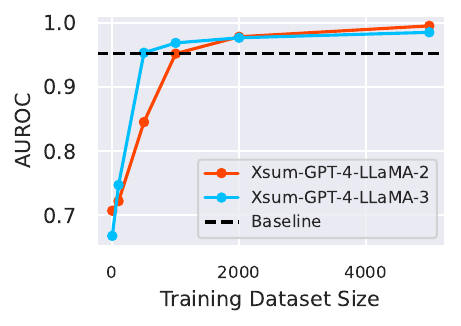}}
\subfloat[Writing]{\includegraphics[width=0.34\textwidth]{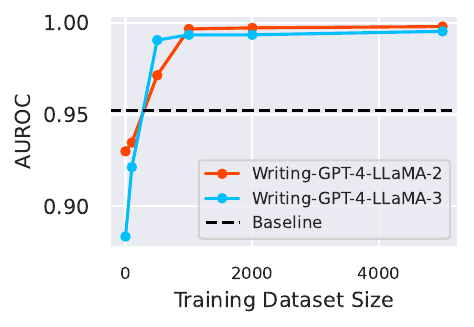}}
\subfloat[PubMed]{\includegraphics[width=0.33\textwidth]{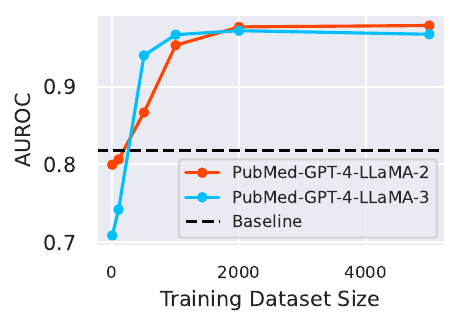}}
\end{center}
\caption{AUORC results from our fine-tuned surrogate model with different training dataset size.}
\label{fig:different_training_data_size}
\end{figure*}

\subsection{Experimental Analysis}

\paragraph{Dataset Size.}

Our method requires only a small amount of data for training. The model converges quickly with the small-size dataset. To show the training efficiency of our method, we provide the results with various training dataset sizes in Figure ~\ref{fig:different_training_data_size}. For each dataset, we use GPT-4 as the target source model while \texttt{Llama-2-7B} and \texttt{Llama-3-8B} as the surrogate model respectively. With the increasing amount of training data, the performance increases rapidly and stays stable with more data. Moreover, with around 500 (up to 1000) training samples, the performance of our method matches the baseline on both Llama-2 and Llama-3. Besides, our method achieves its best performance and exceeds the baseline with around 2000 training data, which indicates that with only a little training effort, our method can achieve incredibly better performance than the baseline, even more than 99\%.

\begin{table}[t]
\vspace{-0.5cm}
    \centering
    \caption{The results comparison of our method trained with the combination of different data sources. Our method achieves better results with more data sources, demonstrating the generalizability of our method and potentially leading to training a universal surrogate model for all closed-source models. $\dagger$: Train the surrogate model separately to each test set. $*$: Train one surrogate model for all test sets.}
    \setlength\tabcolsep{5.4pt}
    \begin{tabular*}{14.cm}{l|c|ccc|ccc}
    \toprule
    \multirow{2}{*}{\textbf{Method}} & \textbf{ChatGPT} & \multicolumn{3}{c|}{\textbf{GPT-4}} &\multicolumn{3}{c}{\textbf{Claude-3}} \\
    & \textbf{PubMed} & \textbf{PubMed} & \textbf{XSum}  & \textbf{Writing} & \textbf{PubMed} & \textbf{XSum}  & \textbf{Writing} \\ 
    
    \hline
    Baseline &0.9051 &0.7995 &0.7072 &0.9299  &0.8877 &0.9143 &0.9248   \\
    \sysname$^\dagger$(1 source) &\textbf{0.9853} &0.9785 &0.9954 &0.9980 &\textbf{0.9942} &0.9994 &\textbf{0.9993}    \\
    \sysname$^*$(1 source) &0.9829 &0.9785 &0.9954 &0.9980 &0.9875 &0.9993 &0.9977    \\
    \hline
    \sysname$^*$(2 sources)   &0.9832 &0.9803 &\textbf{0.9981} &\textbf{0.9986} &0.9875 &0.9994 &0.9976  \\
    \sysname$^*$(3 sources)   &0.9827 &\textbf{0.9809} &0.9968 &0.9985   &0.9864 &\textbf{0.9996} &0.9982\\
    \bottomrule

    \hline
    \end{tabular*}
    \label{tab:Generalization}
\end{table}
\paragraph{Generalizability.}
Our method aligns with the distribution between the surrogate model and source model with the texts generated by source models. In this section, we further explore the generalizability of our method where we follow several settings: 1) train the surrogate model separately using the data from the corresponding source model, 2) train the model with the single data source and evaluate it on unknown source models (one-for-all), 3) train the model with mixed data sources. We conduct a group of experiments following the settings, as shown in Table~\ref{tab:Generalization}. We use \texttt{Llama2-7B} as the surrogate model and a total 5K training data in all settings. The dataset with a single data source includes 5K texts from the corresponding source model in the first setting and only GPT-4 in the one-for-all setting while two data sources consist of 2.5K ChatGPT and GPT-4 texts, respectively. Three data sources refer to the combination of 1.3K texts each from ChatGPT, GPT-4 and Claude-3. Surprisingly, the models trained with more data sources achieve better accuracy on all evaluation sets except on PubMed-ChatGPT. However, the performance on PubMed-ChatGPT only shows negligible degradation in one-for-all and mixed data source settings.
The superior performance in the one-for-all setting implies the surrogate model trained with \sysname can be extended to texts of unknown source models. The results demonstrate the generalizability of our method, leading to training a universal surrogate model for all closed-source models and detecting the machine-generated texts without knowing the model source. Finally, the results in the one-for-all setting imply current closed-source models tend to have a similar distribution. Evaluation results on more unknown source models can be found in Appendix~\ref{sec:one-for-all}.

\paragraph{Surrogate Model Selection.}
\begin{wraptable}{r}{0.65\textwidth}
\centering
\vspace{-4mm}
\caption{Results comparison of our method with different surrogate models on Claude-3. The performance improvement with our method on different surrogate models shows that our method can be adapted to any open-source surrogate model.}
\begin{tabular}{l|lll}
    \toprule
\multirow{2}{*}{\textbf{Surrogate Model}} & \multicolumn{3}{c}{\textbf{Claude3}}                                                           \\  
                                 & \multicolumn{1}{c}{\textbf{PubMed}} & \multicolumn{1}{c}{\textbf{XSum}} & \multicolumn{1}{c}{\textbf{Writing}} \\ \hline
Llama2-7B                       & 0.8876                      & 0.9132                    & 0.9243                      \\
Llama2-7B(with \sysname)             & \textbf{0.9424}                      & \textbf{0.9773}                    & \textbf{0.9962}                      \\ \hline
Llama3-8B                        & 0.7764                      & 0.9390                    & 0.8827                      \\
Llama3-8B(with \sysname)              & \textbf{0.9102}                      & \textbf{0.9892}                    & \textbf{0.9967}                      \\ \hline
GPT-Neo-2.7B                     & 0.8900                      & 0.9828                    & 0.9445                      \\
GPT-Neo-2.7B(with \sysname)          & \textbf{0.8997}                      & \textbf{0.9852}                    & \textbf{0.9515}     
\\
    \bottomrule
\end{tabular}
\vspace{-2mm}
\label{tab:different_surrogate_model}
\end{wraptable}
Our method can be adapted to any open-source surrogate model. We evaluate our method with different surrogate models including on \texttt{GPT-NEO-2.7B}, \texttt{Llama2-7B}, and \texttt{Llama3-8B}. The experiments are conducted based on Fast-DetectGPT and trained with 5K training data. 
The results are shown in Table~\ref{tab:different_surrogate_model} with details in each dataset. In all, compared with the original surrogate models, the surrogate models with \sysname obtain much higher accuracy. For example, Llama-2 and Llama-3 only obtain 0.8876 and 0.7764 on PubMed-Claude-3 while their counterparts trained with \sysname achieve 0.9424 and 0.9192. The improvement across various surrogate models suggests that our approach is compatible with a range of surrogates, rather than just a particular carefully chosen surrogate model.

\paragraph{Ablation Study.}
We conduct a group of ablation studies on several datasets, as shown in Table~\ref{tab:ablation_study}. Since our method can be adapted to any previous logits-based methods such as DNA-GPT and Fast-DetectGPT, the ablation study is conducted on top of them to further demonstrate the effectiveness of our method. We choose \texttt{Llama2-7B} as the basic surrogate model for all experiments. We compare the results of the baseline model and the baseline model trained with our framework. In general, our method boosts the performance of the baseline model at different scales. For example, DNA-GPT achieves 0.8947 accuracy score on PubMed-GPT-4 while with the surrogate trained by our method, DNA-GPT obtains 0.9879 on PubMed-GPT-4. Moreover, we gain a similar conclusion on Fast-DetectGPT. For instance, on PubMed-GPT-4, the original Fast-DetectGPT only has 0.8179 accuracy. However, after training the surrogate model with our method, it achieves 0.9785 accuracy on PubMed-GPT-4. On the one hand, the improvement in baseline shows the effectiveness of our methods. On the other hand, the improvement across different methods demonstrates that our method can be utilized on any logits-based model to boost their performance on black-box detection.

\begin{table}[]
\vspace{-0.3cm}
    \centering
    \setlength\tabcolsep{4.0pt}
    \caption{Ablation study. We report the results comparison of the baseline method and the method with our \sysname. The improvement upon all baselines shows the effectiveness of our \sysname.}
    \begin{tabular*}{14.cm}{l|c|ccc|ccc}
    \toprule
    \multirow{2}{*}{\textbf{Method}} & \textbf{ChatGPT} & \multicolumn{3}{c|}{\textbf{GPT-4}}& \multicolumn{3}{c}{\textbf{Claude3}} \\
    & \textbf{PubMed} & \textbf{PubMed} & \textbf{XSum}  & \textbf{Writing} & \textbf{PubMed} & \textbf{XSum} & \textbf{Writing}  \\ 
    
    \hline
    Detect-GPT &0.6260 &0.5291 &0.6689 &0.7991  &0.6472     &0.9184     &0.9306  \\
    Detect-GPT + \sysname   &\textbf{0.7388} &\textbf{0.7034} &\textbf{0.8318} &\textbf{0.9076}  &\textbf{0.7550}     &\textbf{0.9569}     &\textbf{0.9568}  \\
    \hline
        DNA-GPT &0.9547 &0.8947 &0.6980 &0.8537  &0.9500     &0.9359     &0.9648  \\ 
     DNA-GPT + \sysname    &\textbf{0.9932} &\textbf{0.9879} &\textbf{0.7524} &\textbf{0.9048}  &\textbf{0.9711}     & \textbf{0.9391}    &\textbf{0.9675}    \\ 
     \hline
    Fast-DetectGPT  & 0.9309    & 0.8179   & 0.9136  & 0.9521     & 0.8900     & 0.9828    & 0.9445  \\ 
    Fast-DetectGPT + \sysname   & \textbf{0.9853}  & \textbf{0.9785} & \textbf{0.9954} & \textbf{0.9980}  & \textbf{0.9630} & \textbf{0.9867} & \textbf{0.9981} \\ 
    \bottomrule

    \end{tabular*}
    \label{tab:ablation_study}
    \vspace{-0.3cm}
\end{table}

\paragraph{Non-English Detection.}
\begin{wraptable}{r}{0.4\textwidth}
\caption{Results comparison on Non-English texts.}
    \begin{tabular}{cc}
    \toprule
        \textbf{Method} &\textbf{German-GPT-4}  \\
    \hline
     DNA-GPT &0.7851    \\
         Fast-DetectGPT &0.8814 \\
         \hline
        \sysname (Ours) &\textbf{0.9955} \\
    \bottomrule
    \end{tabular}
    \label{tab:german}
\end{wraptable}
Recent work\citep{liang2023gpt} finds that current AI detectors are biased for non-English languages, which hinders the application of LLM detection for non-English languages. Following\citep{yang2023dna}, we choose English and German splits of WMT-2016\citep{bojar-EtAl:2016:WMT1} to test the ability of our method in German. We select 150 instances as human-written texts and use the first 30 tokens to regenerate by calling GPT-4 API as machine-generate texts. During training, we randomly select 1K German samples generated by GPT-4 from WildChat\citep{zhao2024wildchat} and trained for 5 epochs. As shown in Table~\ref{tab:german}, our method achieves the highest accuracy (> 99\%) on German detection compared with DNA-GPT and Fast-DetectGPT. Due to the plug-and-play property, our method can be further used to eliminate the bias in other AI detectors.

\paragraph{Adversarial Attack.}
\begin{wrapfigure}{r}{0.5\textwidth}
\vspace{-6mm}
\includegraphics[width=.5\textwidth]{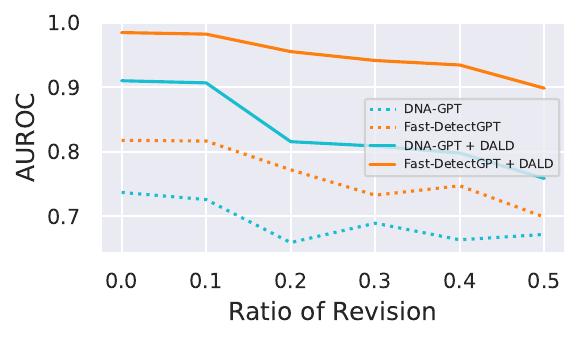}
\vspace{-8mm}
\caption{Results comparison on samples with the adversarial attack. The performance improvement with our method on different methods shows that our method is robust to adversarial attacks.}
\label{fig:attack}
\vspace{-3mm}
\end{wrapfigure}
In practical situations, machine-generated corpus is often modified and revised by users or another language model. We consider the modified samples as adversarial samples. Evaluating LLM detectors with adversarial samples is important to real-world applications. 
Following\citep{mitchell2023detectgpt} and\citep{yang2023dna}, we randomly mask $r\%$ tokens with 5-word spans in 150 instances from the PubMed dataset regenerated by GPT-4 and apply \texttt{T5-3B} to do the mask-filling task to generate adversarial samples. 
Experiments with different mask ratios are conducted, specifically $r\% \in \{0.1, 0.2, 0.3, 0.4, 0.5\}$ and results are shown in Figure~\ref{fig:attack}. We compare the results of DNA-GPT, Fast-DetectGPT and their counterparts with our method. In each group, with the enhancement of our method, the model achieves better results on all mask ratios. Moreover, the models with our method obtain even better accuracy at the highest mask ratio compared with the original models on samples without adversarial attack.

\paragraph{Open-source Model Detection.}


In addition to detecting texts from closed-source models, we also evaluate our approach on the open-source model Llama-3, Llama-3.1\citep{llama3.1} and Mistral\citep{jiang2023mistral} comparing it with DNA-GPT and Fast-DetectGPT. We follow similar settings as closed-source models and fine-tune \texttt{Llama2-7B} as surrogate model.  The results are shown in Table \ref{tab:opensource_extended}, where our method works best detection performance on three models compared to other methods, showing the effectiveness of our method on both closed-source and open-source models.



\begin{table}[h]
\centering
\vspace{-4mm}
\caption{Results on open-source models Llama-3, Llama-3.1, and Mistral across three datasets: PubMed (PM), XSum, and Writing. We compare \sysname with Fast-DetectGPT(Fast).}
\setlength{\tabcolsep}{5pt}
\begin{tabular}{l|lll|lll|lll}
    \toprule
\multirow{2}{*}{\textbf{Method}} & \multicolumn{3}{c|}{\textbf{Llama3-8B}} & \multicolumn{3}{c|}{\textbf{Llama3.1-8B}} & \multicolumn{3}{c}{\textbf{Mistral-7B}} \\  
                                 & \multicolumn{1}{c}{\textbf{PM}} & \multicolumn{1}{c}{\textbf{XSum}} & \multicolumn{1}{c|}{\textbf{Writing}} 
                                 & \multicolumn{1}{c}{\textbf{PM}} & \multicolumn{1}{c}{\textbf{XSum}} & \multicolumn{1}{c|}{\textbf{Writing}} 
                                 & \multicolumn{1}{c}{\textbf{PM}} & \multicolumn{1}{c}{\textbf{XSum}} & \multicolumn{1}{c}{\textbf{Writing}} \\ \hline
Fast & 0.9120 & 0.9845 & 0.9906 & 0.8668 & 0.9914 & 0.9958 & 0.6880 & 0.7931 & 0.9211 \\
\sysname & \textbf{0.9352} & \textbf{0.9995} & \textbf{0.9972} & \textbf{0.9059} & \textbf{1.0000} & \textbf{0.9998} & \textbf{0.7733} & \textbf{0.8822} & \textbf{0.9573} \\
    \bottomrule
\end{tabular}
\vspace{-2mm}
\label{tab:opensource_extended}
\end{table}

\section{Conclusion}
The rapid evolution of potent Large Language Models (LLMs) underscores the critical necessity for robust black-box detection methods. However, previous methods which rely on surrogate models, suffer from performance degradation, especially with the frequent updates of closed-source models. Our contribution addresses this shortfall by significantly aligning the distribution of the surrogate model and source model. Additionally, we introduce a plug-and-play approach for logits-based detectors, ensuring seamless integration. This method remains versatile across diverse text sources or unknown sources, adapting to the swift evolution of LLMs. In conclusion, our innovations offer compelling solutions to the urgent demand for effective black-box detection methods within the realm of LLM development, bridging critical gaps in current methodologies.

\newpage
\clearpage
{
\small
\bibliographystyle{unsrt}
\bibliography{citation}

\begin{thebibliography}{10}

\bibitem{GPT35}
OpenAI.
\newblock {O}pen{AI} {M}odels - {GPT}3.5, 2022.

\bibitem{openai2023gpt4}
OpenAI.
\newblock Gpt-4 technical report, 2023.

\bibitem{touvron2023llama}
Hugo Touvron, Thibaut Lavril, Gautier Izacard, Xavier Martinet, Marie-Anne
  Lachaux, Timothée Lacroix, Baptiste Rozière, Naman Goyal, Eric Hambro,
  Faisal Azhar, Aurelien Rodriguez, Armand Joulin, Edouard Grave, and Guillaume
  Lample.
\newblock Llama: Open and efficient foundation language models, 2023.

\bibitem{touvron2023llama2}
Hugo Touvron, Louis Martin, Kevin Stone, Peter Albert, Amjad Almahairi, Yasmine
  Babaei, Nikolay Bashlykov, Soumya Batra, Prajjwal Bhargava, Shruti Bhosale,
  et~al.
\newblock Llama 2: Open foundation and fine-tuned chat models.
\newblock {\em arXiv preprint arXiv:2307.09288}, 2023.

\bibitem{Llama3}
Meta.
\newblock {M}eta {M}odels - {LlaMa}3, 2024.

\bibitem{claude}
Anthropic.
\newblock {A}nthropic {M}odels - {C}laude3, 2024.

\bibitem{m2022exploring}
Muneer M~Alshater.
\newblock Exploring the role of artificial intelligence in enhancing academic
  performance: A case study of chatgpt.
\newblock {\em Available at SSRN 4312358}, 2022.

\bibitem{ahmed2021detectingfakenews}
Alim Al~Ayub Ahmed, Ayman Aljabouh, Praveen~Kumar Donepudi, and Myung~Suh Choi.
\newblock Detecting fake news using machine learning: A systematic literature
  review.
\newblock {\em arXiv preprint arXiv:2102.04458}, 2021.

\bibitem{adelani2020generatingreviews}
David~Ifeoluwa Adelani, Haotian Mai, Fuming Fang, Huy~H Nguyen, Junichi
  Yamagishi, and Isao Echizen.
\newblock Generating sentiment-preserving fake online reviews using neural
  language models and their human-and machine-based detection.
\newblock In {\em Advanced information networking and applications: Proceedings
  of the 34th international conference on advanced information networking and
  applications (AINA-2020)}, pages 1341--1354. Springer, 2020.

\bibitem{lee2023languageplagiarize}
Jooyoung Lee, Thai Le, Jinghui Chen, and Dongwon Lee.
\newblock Do language models plagiarize?
\newblock In {\em Proceedings of the ACM Web Conference 2023}, pages
  3637--3647, 2023.

\bibitem{gao2022comparing}
Catherine~A Gao, Frederick~M Howard, Nikolay~S Markov, Emma~C Dyer, Siddhi
  Ramesh, Yuan Luo, and Alexander~T Pearson.
\newblock Comparing scientific abstracts generated by chatgpt to original
  abstracts using an artificial intelligence output detector, plagiarism
  detector, and blinded human reviewers.
\newblock {\em bioRxiv}, pages 2022--12, 2022.

\bibitem{Else2023AbstractsWB}
Holly Else.
\newblock Abstracts written by chatgpt fool scientists.
\newblock {\em Nature}, 613:423 -- 423, 2023.

\bibitem{chakraborty2023possibilities}
Souradip Chakraborty, Amrit~Singh Bedi, Sicheng Zhu, Bang An, Dinesh Manocha,
  and Furong Huang.
\newblock On the possibilities of ai-generated text detection.
\newblock {\em arXiv preprint arXiv:2304.04736}, 2023.

\bibitem{krishna2023paraphrasing}
Kalpesh Krishna, Yixiao Song, Marzena Karpinska, John Wieting, and Mohit Iyyer.
\newblock Paraphrasing evades detectors of ai-generated text, but retrieval is
  an effective defense, 2023.

\bibitem{sadasivan2023can}
Vinu~Sankar Sadasivan, Aounon Kumar, Sriram Balasubramanian, Wenxiao Wang, and
  Soheil Feizi.
\newblock Can ai-generated text be reliably detected?
\newblock {\em arXiv preprint arXiv:2303.11156}, 2023.

\bibitem{dugan2024raid}
Liam Dugan, Alyssa Hwang, Filip Trhlik, Josh~Magnus Ludan, Andrew Zhu, Hainiu
  Xu, Daphne Ippolito, and Chris Callison-Burch.
\newblock Raid: A shared benchmark for robust evaluation of machine-generated
  text detectors, 2024.

\bibitem{lin2024detecting}
Li~Lin, Neeraj Gupta, Yue Zhang, Hainan Ren, Chun-Hao Liu, Feng Ding, Xin Wang,
  Xin Li, Luisa Verdoliva, and Shu Hu.
\newblock Detecting multimedia generated by large ai models: A survey.
\newblock {\em arXiv preprint arXiv:2402.00045}, 2024.

\bibitem{abdelnabi21oakland}
Sahar Abdelnabi and Mario Fritz.
\newblock Adversarial watermarking transformer: Towards tracing text provenance
  with data hiding.
\newblock In {\em 42nd IEEE Symposium on Security and Privacy}, 2021.

\bibitem{kirchenbauer2023watermark}
John Kirchenbauer, Jonas Geiping, Yuxin Wen, Jonathan Katz, Ian Miers, and Tom
  Goldstein.
\newblock A watermark for large language models.
\newblock {\em arXiv preprint arXiv:2301.10226}, 2023.

\bibitem{yoo2023robust}
KiYoon Yoo, Wonhyuk Ahn, Jiho Jang, and Nojun Kwak.
\newblock Robust natural language watermarking through invariant features.
\newblock {\em arXiv preprint arXiv:2305.01904}, 2023.

\bibitem{christ2023undetectable}
Miranda Christ, Sam Gunn, and Or~Zamir.
\newblock Undetectable watermarks for language models.
\newblock {\em Cryptology ePrint Archive}, 2023.

\bibitem{GPTZero}
Edward Tian.
\newblock Gptzero: An ai text detector, 2023.

\bibitem{verma2023ghostbuster}
Vivek Verma, Eve Fleisig, Nicholas Tomlin, and Dan Klein.
\newblock Ghostbuster: Detecting text ghostwritten by large language models,
  2023.

\bibitem{solaiman2019release}
Irene Solaiman, Miles Brundage, Jack Clark, Amanda Askell, Ariel Herbert-Voss,
  Jeff Wu, Alec Radford, Gretchen Krueger, Jong~Wook Kim, Sarah Kreps, et~al.
\newblock Release strategies and the social impacts of language models.
\newblock {\em arXiv preprint arXiv:1908.09203}, 2019.

\bibitem{wu2023llmdet}
Kangxi Wu, Liang Pang, Huawei Shen, Xueqi Cheng, and Tat-Seng Chua.
\newblock Llmdet: A large language models detection tool, 2023.

\bibitem{yu2023gpt}
Xiao Yu, Yuang Qi, Kejiang Chen, Guoqiang Chen, Xi~Yang, Pengyuan Zhu, Weiming
  Zhang, and Nenghai Yu.
\newblock Gpt paternity test: Gpt generated text detection with gpt genetic
  inheritance.
\newblock {\em arXiv preprint arXiv:2305.12519}, 2023.

\bibitem{mitchell2023detectgpt}
Eric Mitchell, Yoonho Lee, Alexander Khazatsky, Christopher~D Manning, and
  Chelsea Finn.
\newblock Detectgpt: Zero-shot machine-generated text detection using
  probability curvature.
\newblock {\em arXiv preprint arXiv:2301.11305}, 2023.

\bibitem{yang2023dna}
Xianjun Yang, Wei Cheng, Linda Petzold, William~Yang Wang, and Haifeng Chen.
\newblock Dna-gpt: Divergent n-gram analysis for training-free detection of
  gpt-generated text.
\newblock {\em arXiv preprint arXiv:2305.17359}, 2023.

\bibitem{bao2023fast}
Guangsheng Bao, Yanbin Zhao, Zhiyang Teng, Linyi Yang, and Yue Zhang.
\newblock Fast-detectgpt: Efficient zero-shot detection of machine-generated
  text via conditional probability curvature.
\newblock {\em arXiv preprint arXiv:2310.05130}, 2023.

\bibitem{yang2023survey}
Xianjun Yang, Liangming Pan, Xuandong Zhao, Haifeng Chen, Linda Petzold,
  William~Yang Wang, and Wei Cheng.
\newblock A survey on detection of llms-generated content.
\newblock {\em arXiv preprint arXiv:2310.15654}, 2023.

\bibitem{gehrmann2019gltr}
Sebastian Gehrmann, SEAS Harvard, Hendrik Strobelt, and Alexander~M Rush.
\newblock Gltr: Statistical detection and visualization of generated text.
\newblock {\em ACL 2019}, page 111, 2019.

\bibitem{grechnikov2009detection}
EA~Grechnikov, GG~Gusev, AA~Kustarev, and AM~Raigorodsky.
\newblock Detection of artificial texts.
\newblock {\em RCDL2009 Proceedings. Petrozavodsk}, pages 306--308, 2009.

\bibitem{badaskar2008identifying}
Sameer Badaskar, Sachin Agarwal, and Shilpa Arora.
\newblock Identifying real or fake articles: Towards better language modeling.
\newblock In {\em Proceedings of the Third International Joint Conference on
  Natural Language Processing: Volume-II}, 2008.

\bibitem{deng2023efficient}
Zhijie Deng, Hongcheng Gao, Yibo Miao, and Hao Zhang.
\newblock Efficient detection of llm-generated texts with a bayesian surrogate
  model, 2023.

\bibitem{wang2023seqxgpt}
Pengyu Wang, Linyang Li, Ke~Ren, Botian Jiang, Dong Zhang, and Xipeng Qiu.
\newblock Seqxgpt: Sentence-level ai-generated text detection.
\newblock {\em arXiv preprint arXiv:2310.08903}, 2023.

\bibitem{li2023origin}
Linyang Li, Pengyu Wang, Ke~Ren, Tianxiang Sun, and Xipeng Qiu.
\newblock Origin tracing and detecting of llms.
\newblock {\em arXiv preprint arXiv:2304.14072}, 2023.

\bibitem{guo2023authentigpt}
Zhen Guo and Shangdi Yu.
\newblock Authentigpt: Detecting machine-generated text via black-box language
  models denoising, 2023.

\bibitem{mao2024raidar}
Chengzhi Mao, Carl Vondrick, Hao Wang, and Junfeng Yang.
\newblock Raidar: generative ai detection via rewriting, 2024.

\bibitem{soto2024fewshot}
Rafael~Rivera Soto, Kailin Koch, Aleem Khan, Barry Chen, Marcus Bishop, and
  Nicholas Andrews.
\newblock Few-shot detection of machine-generated text using style
  representations, 2024.

\bibitem{AITextClassifier}
OpenAI.
\newblock {AI} text classifier, Jan 2023.

\bibitem{zhan2023g3detector}
Haolan Zhan, Xuanli He, Qiongkai Xu, Yuxiang Wu, and Pontus Stenetorp.
\newblock G3detector: General gpt-generated text detector.
\newblock {\em arXiv preprint arXiv:2305.12680}, 2023.

\bibitem{chen2023gptsentinel}
Yutian Chen, Hao Kang, Vivian Zhai, Liangze Li, Rita Singh, and Bhiksha Raj.
\newblock Gpt-sentinel: Distinguishing human and chatgpt generated content.
\newblock {\em arXiv preprint arXiv:2305.07969}, 2023.

\bibitem{hans2024spotting}
Abhimanyu Hans, Avi Schwarzschild, Valeriia Cherepanova, Hamid Kazemi,
  Aniruddha Saha, Micah Goldblum, Jonas Geiping, and Tom Goldstein.
\newblock Spotting llms with binoculars: Zero-shot detection of
  machine-generated text, 2024.

\bibitem{su2023detectllm}
Jinyan Su, Terry~Yue Zhuo, Di~Wang, and Preslav Nakov.
\newblock Detectllm: Leveraging log rank information for zero-shot detection of
  machine-generated text, 2023.

\bibitem{venkatraman2024gptwho}
Saranya Venkatraman, Adaku Uchendu, and Dongwon Lee.
\newblock Gpt-who: An information density-based machine-generated text
  detector, 2024.

\bibitem{shi2024words}
Yuhui Shi, Qiang Sheng, Juan Cao, Hao Mi, Beizhe Hu, and Danding Wang.
\newblock Ten words only still help: Improving black-box ai-generated text
  detection via proxy-guided efficient re-sampling, 2024.

\bibitem{mireshghallah2024smaller}
Niloofar Mireshghallah, Justus Mattern, Sicun Gao, Reza Shokri, and Taylor
  Berg-Kirkpatrick.
\newblock Smaller language models are better black-box machine-generated text
  detectors, 2024.

\bibitem{tulchinskii2023intrinsic}
Eduard Tulchinskii, Kristian Kuznetsov, Laida Kushnareva, Daniil Cherniavskii,
  Serguei Barannikov, Irina Piontkovskaya, Sergey Nikolenko, and Evgeny
  Burnaev.
\newblock Intrinsic dimension estimation for robust detection of ai-generated
  texts, 2023.

\bibitem{yang2023zero}
Xianjun Yang, Kexun Zhang, Haifeng Chen, Linda Petzold, William~Yang Wang, and
  Wei Cheng.
\newblock Zero-shot detection of machine-generated codes.
\newblock {\em arXiv preprint arXiv:2310.05103}, 2023.

\bibitem{nicks2024language}
Charlotte Nicks, Eric Mitchell, Rafael Rafailov, Archit Sharma, Christopher~D
  Manning, Chelsea Finn, and Stefano Ermon.
\newblock Language model detectors are easily optimized against.
\newblock In {\em The Twelfth International Conference on Learning
  Representations}, 2024.

\bibitem{hu2021lora}
Edward~J Hu, Yelong Shen, Phillip Wallis, Zeyuan Allen-Zhu, Yuanzhi Li, Shean
  Wang, Lu~Wang, and Weizhu Chen.
\newblock Lora: Low-rank adaptation of large language models.
\newblock {\em arXiv preprint arXiv:2106.09685}, 2021.

\bibitem{narayan-etal-2018-dont}
Shashi Narayan, Shay~B. Cohen, and Mirella Lapata.
\newblock Don{'}t give me the details, just the summary! topic-aware
  convolutional neural networks for extreme summarization.
\newblock In {\em Proceedings of the 2018 Conference on Empirical Methods in
  Natural Language Processing}, pages 1797--1807, 2018.

\bibitem{fan-etal-2018-hierarchical}
Angela Fan, Mike Lewis, and Yann Dauphin.
\newblock Hierarchical neural story generation.
\newblock In {\em Proceedings of the 56th Annual Meeting of the Association for
  Computational Linguistics (Volume 1: Long Papers)}, pages 889--898,
  Melbourne, Australia, July 2018. Association for Computational Linguistics.

\bibitem{bojar-EtAl:2016:WMT1}
Ond~{r}ej Bojar, Rajen Chatterjee, Christian Federmann, Yvette Graham, Barry
  Haddow, Matthias Huck, Antonio Jimeno~Yepes, Philipp Koehn, Varvara
  Logacheva, Christof Monz, Matteo Negri, Aurelie Neveol, Mariana Neves, Martin
  Popel, Matt Post, Raphael Rubino, Carolina Scarton, Lucia Specia, Marco
  Turchi, Karin Verspoor, and Marcos Zampieri.
\newblock Findings of the 2016 conference on machine translation.
\newblock In {\em Proceedings of the First Conference on Machine Translation},
  pages 131--198, Berlin, Germany, August 2016. Association for Computational
  Linguistics.

\bibitem{jin-etal-2019-pubmedqa}
Qiao Jin, Bhuwan Dhingra, Zhengping Liu, William Cohen, and Xinghua Lu.
\newblock {P}ub{M}ed{QA}: A dataset for biomedical research question answering.
\newblock In {\em Proceedings of the 2019 Conference on Empirical Methods in
  Natural Language Processing and the 9th International Joint Conference on
  Natural Language Processing (EMNLP-IJCNLP)}, pages 2567--2577, 2019.

\bibitem{raffel2020exploring}
Colin Raffel, Noam Shazeer, Adam Roberts, Katherine Lee, Sharan Narang, Michael
  Matena, Yanqi Zhou, Wei Li, and Peter~J Liu.
\newblock Exploring the limits of transfer learning with a unified text-to-text
  transformer.
\newblock {\em The Journal of Machine Learning Research}, 21(1):5485--5551,
  2020.

\bibitem{gpt-neo}
Sid Black, Gao Leo, Phil Wang, Connor Leahy, and Stella Biderman.
\newblock {GPT-Neo: Large Scale Autoregressive Language Modeling with
  Mesh-Tensorflow}, March 2021.
\newblock {If you use this software, please cite it using these metadata.}

\bibitem{gpt-j}
Ben Wang and Aran Komatsuzaki.
\newblock {GPT-J-6B: A 6 Billion Parameter Autoregressive Language Model}.
\newblock \url{https://github.com/kingoflolz/mesh-transformer-jax}, May 2021.

\bibitem{zhao2024wildchat}
Wenting Zhao, Xiang Ren, Jack Hessel, Claire Cardie, Yejin Choi, and Yuntian
  Deng.
\newblock Wildchat: 1m chatgpt interaction logs in the wild.
\newblock {\em arXiv preprint arXiv:2405.01470}, 2024.

\bibitem{liang2023gpt}
Weixin Liang, Mert Yuksekgonul, Yining Mao, Eric Wu, and James Zou.
\newblock Gpt detectors are biased against non-native english writers.
\newblock {\em arXiv preprint arXiv:2304.02819}, 2023.

\bibitem{llama3.1}
Meta.
\newblock Llama 3.1: Open foundation and fine-tuned chat models, 2024.
\newblock Accessed: 2024-10-27.

\bibitem{jiang2023mistral}
Albert~Q Jiang, Alexandre Sablayrolles, Arthur Mensch, Chris Bamford,
  Devendra~Singh Chaplot, Diego de~las Casas, Florian Bressand, Gianna Lengyel,
  Guillaume Lample, Lucile Saulnier, et~al.
\newblock Mistral 7b.
\newblock {\em arXiv preprint arXiv:2310.06825}, 2023.

\bibitem{ju22robust}
Haotian Ju, Dongyue Li, and Hongyang~R Zhang.
\newblock Robust fine-tuning of deep neural networks with hessian-based
  generalization guarantees.
\newblock In Kamalika Chaudhuri, Stefanie Jegelka, Le~Song, Csaba Szepesvari,
  Gang Niu, and Sivan Sabato, editors, {\em Proceedings of the 39th
  International Conference on Machine Learning}, volume 162 of {\em Proceedings
  of Machine Learning Research}, pages 10431--10461. PMLR, 17--23 Jul 2022.

\bibitem{sason2013entropy}
Igal Sason.
\newblock Entropy bounds for discrete random variables via coupling.
\newblock In {\em 2013 IEEE International Symposium on Information Theory},
  pages 414--418, 2013.

\bibitem{liu2019roberta}
Yinhan Liu, Myle Ott, Naman Goyal, Jingfei Du, Mandar Joshi, Danqi Chen, Omer
  Levy, Mike Lewis, Luke Zettlemoyer, and Veselin Stoyanov.
\newblock Roberta: A robustly optimized bert pretraining approach.
\newblock {\em arXiv preprint arXiv:1907.11692}, 2019.

\bibitem{ippolito2019automatic}
Daphne Ippolito, Daniel Duckworth, Chris Callison-Burch, and Douglas Eck.
\newblock Automatic detection of generated text is easiest when humans are
  fooled.
\newblock {\em arXiv preprint arXiv:1911.00650}, 2019.

\bibitem{yang2023code}
Xianjun Yang, Kexun Zhang, Haifeng Chen, Linda Petzold, William~Yang Wang, and
  Wei Cheng.
\newblock Zero-shot detection of machine-generated codes.
\newblock {\em arXiv preprint arXiv:2310.05103}, 2023.

\bibitem{hendrycks2021measuring}
Dan Hendrycks, Steven Basart, Saurav Kadavath, Mantas Mazeika, Akul Arora,
  Ethan Guo, Collin Burns, Samir Puranik, Horace He, Dawn Song, et~al.
\newblock Measuring coding challenge competence with apps.
\newblock {\em arXiv preprint arXiv:2105.09938}, 2021.

\end{thebibliography}
}

\newpage

\section*{\centering \Large{Appendix: DALD}}

\section{Theoretical Analysis}
\label{sec:theory}

\subsection{Addtional Technical Details}

\newcommand{\soft}{\text{softmax}}

First, we formalize some concepts in preparation of our analysis. 
We assume the vocabulary set to be $\cW = \{ w_j \} _{j = 1} ^W$ with size $W$. 
We will use slices to denote a segment of passage, but unlike python slices, $[a:b]$ will be inclusive on the end points and should denote the indices $\{ a, a+1, \cdots, b \}$. 

\textbf{Text Completion.} Consider the next-token generation process with respect to a language model $f$, which maps an input passage into an array of logits on the vocabulary set $\cW$: 

\begin{definition} \label{def: text completion}
Given a prompt $X_{[:l_0]}$ of length $l_0$, a new length $l > l_0$ and language model $f$, we use

$$X_{[l_0+1 : l]} \sim M_f (X_{[:l_0]}, l)$$

to denote the process of text completion up to a total of $l$ tokens with respect to the following sampling method: 

$$ X_{l_0 + i} \sim \soft \big( f (X_{[: l_0 + i - 1]}) \big), i = 1, \cdots, l - l_0. $$
\end{definition}

In the following we will abbreviate $M_{\sur} = M_{f_{\sur}}$ and $M_{\tar} = M_{f_{\tar}}$, and also denote $p_{\sur} (X) = \soft \big( f_{\sur} (X) \big)$, $p_{\tar} (X) = \soft \big( f_{\tar} (X) \big)$ by for any text segment $X$. 

\textbf{Loss Function for Fine-tuning.} Given target model $f_{\tar}$, we fine-tune a surrogate model $f_{\sur}$ on the following dataset generated by the target model: 

$$\cS = \big \{ X^i = [P^i, R^i] \big | P^i \sim \cP, R^i \sim M_{\tar} (P^i, L) \big \} _{i = 1} ^N, $$

where each $P^i$ is a prompt sampled from a prior distribution $\cP$, and its corresponding $R^i$ is a text completion result sampled from distribution $M_{\tar}$. 

Now let us focus on a single text data $X = [P, R] \in \cS$ with prompt length $l(P) = l_0$ and total length $l(X) = L$. 
For any $l_0 < l \leq L$, consider the next-token logit $p = p_{\sur} (X_{[:l - 1]}) \in [0, 1] ^{T}$ of the surrogate model for input $X_{[:l]}$, as well as a sample output from the target model $X_{l} \sim q = p_{\tar} (X_{[:l - 1]})$. 
The cross-entropy loss is then calculated with

$$ \ell(X_{l} | X_{[:l - 1]})
= - \sum_{j = 1} ^W \ind \{ X_{l} = w_j \} \log p_j, $$

which has expectation

$$ \EE_{X_{l} \sim q} \big[ \ell(X_{l} | X_{[:l - 1]}) \big]
= -\sum_{j = 1} ^W q_j \log p_j = \cH (q, p) \geq \cH (q), $$

where $\cH(q)$ and $\cH(q, p)$ denote the entropy of $q$ and the cross-entropy between $q$ and $p$, respectively. 
It is then evident that the optimal expected value of $\ell (X_{l} | X_{[:l - 1]})$ is $\cH(q)$, taken when the surrogate model $f_{\sur}$ produces the same logit output as $f_{\tar}$. 

With this, we can express the training objective used for fine-tuning the surrogate model as: 

$$ \cL (f_{\sur}, \cS) = \frac{1} {|\cS|} \sum_{X = [P, R] \in \cS} \sum_{l = l(P) + 1} ^{l(X)} \ell(X_l | X_{[:l - 1]}), $$


\subsection{Sample Complexity of Fine-tuning} \label{app:theory_n}

We now make the following assumption, which is supported by many previous works \citep{ju22robust}
\begin{assumption} \label{asm: fine-tuning}
The sample complexity of fine-tuning with loss $\cL (f_{\sur}, \cS)$ is of order $O(1 / \epsilon^2) \times O(\log (1 / \delta))$.  
In other words, when sample size $|\cS|$ exceed this order, for almost all $X = [P, R]$ and a corresponding $l > l(P)$, we have
\begin{align}
    \EE_{X_{l} \sim p_{\tar} (X_{[:l - 1]})} \ell (X_{l} | X_{[:l - 1]}) \leq \cH \big( p_{\tar} (X_{[:l - 1]}) \big) + \epsilon. \label{ineq: asmp loss}
\end{align}
\end{assumption}

\begin{remark}
The result of this assumption is that the surrogate model should approximate the target model well after fine-tuning. 
We are taking quite a liberty in this assumption in terms of input $X$, but this is simply for the sake of demonstration. 
In practice, what one can expect is the expected loss $\EE_{\cS} \cL(f_{\sur}, \cS)$ to be almost optimal, and with generalization this assumption should hold approximately. 
\end{remark}

\textbf{Fast-DetectGPT Method.} For two passages $X$, $\tilde{X}$ with the same length $L$, the conditional probability is defined as

$$ p_f(\tilde{X} | X) = \prod_{l = l_0 + 1} ^{L} \PP \big[ y = \tilde{X}_l | y \sim \soft \big( f(X_{[:l - 1]}) \big) \big], $$

where $l_0$ is a pre-selected length parameter. 


In order to evaluate a passage $X$ with length $L$, we sample new text passages $\tilde{X} \sim p_f (\cdot | X)$ to estimate the conditional probability curvature: 
$$ \curv (X, f) = \frac {\log p_f(X | X) - \tilde {\mu}} {\tilde {\sigma}}, $$
where
$$ \tilde{\mu} = \EE_{\tilde{X} \sim p_f (\cdot | X)} \big[ \log p_f (\tilde{X} | X) \big], \quad
\tilde{\sigma}^2 = \EE_{\tilde{X} \sim p_f (\cdot | X)} \big[ \log p_f (\tilde{X} | X) - \tilde{\mu} \big]^2, $$

The basic assumption of Fast-DetectGPT is a positive gap between machine- and human-generated passages in terms of conditional probability curvature: 
\begin{assumption}
    The conditional probability curvature of a machine-generated passage $X_{\tar}$ from model $f_{\tar}$ is substantially greater than that of a human-generated passage $X_{\hum}$:
    
    $$ \curv (X_{\tar}, f_{\tar}) - \curv (X_{\hum}, f_{\tar}) \geq \Delta. $$
\end{assumption}

Now with the above assumption, we can prove Theorem~\ref{thm: fine-tuning}, which characterizes the difference between the target model and surrogate model. 

\begin{proof} [Proof of Theorem~\ref{thm: fine-tuning}]
A direct result of Assumption~\ref{asm: fine-tuning} is that the Kullback-Liebler devergence between the two distributions are small: 
$$ \text{KL} \big( p_{\tar} (X_{[:l]}) || p_{\sur} (X_{[:l]}) \big) = \cH \big(p_{\tar} (X_{[:l]}), p_{\sur} (X_{[:l]}) \big) - \cH \big(p_{\tar} (X_{[:l]}) \big) \leq \epsilon. $$
As per Pinsker's inequality, this gives us
$$ d_{\text{TV}} \big( p_{\tar} (X_{[:l]}), p_{\sur} (X_{[:l]}) \big) \leq \sqrt{\frac{1}{2} \text{KL} (p_{\tar} (X_{[:l]}) || p_{\sur} (X_{[:l]}))} \leq \sqrt{\frac {\epsilon} {2}}. $$

Now to analyze the difference between conditional probability curvature, we have: 
\begin{align*}
    \log p_{f_{\sur}} (X | X) - \log p_{f_{\tar}} (X | X) &= \sum_{l = l_0 + 1} ^L \big[ \log p_{\sur} (X_l | X_{[:l - 1]}) - \log p_{\tar} (X_l | X_{[:l - 1]}) \big] \\
    &\leq \sum_{l = l_0 + 1} ^L d_{\text{TV}} (p_{\sur}, p_{\tar}) / C, 
\end{align*}
where $C = \min p(X_l | X_{[:l - 1]})$ is the minimum possibility given to a next-token. 
For both machine and human generated passages, this value should be reasonably large. 
On the other hand, 
\begin{align*}
    \tilde{\mu}_{\sur} - \tilde{\mu}_{\tar} &= \EE_{\tilde{X} \sim p_{f_{\sur}} (\cdot | X)} \log p_{f_{\sur}} (\tilde{X} | X) - \EE_{\tilde{X} \sim p_{f_{\tar}} (\cdot | X)} \log p_{f_{\tar}} (\tilde{X} | X) \\
    &= \sum_{l = l_0 + 1} ^L \big[ \EE_{\tilde{X}_l \sim p_{\sur} (X_{[:l - 1]})} \log p_{\sur} (\tilde{X}_l | X_{[:l - 1]}) - \EE_{\tilde{X}_l \sim p_{\tar} (X_{[:l - 1]})} \log p_{\tar} (\tilde{X}_l | X_{[:l - 1]}) \big] \\
    &= \sum_{l = l_0 + 1} ^L \big[ \cH (p_{\sur} (X_{[:l - 1]})) - \cH (p_{\tar} (X_{[:l - 1]})) \big] \\
    &\leq \sum_{l = l_0 + 1} ^L \big[ d_{\text{TV}} (p_{\sur}, p_{\tar}) \log (W - 1) + h (d_{\text{TV}} (p_{\sur}, p_{\tar})) \big], 
\end{align*}
where the inequality is from \cite{sason2013entropy}. 
With this we have that when $\epsilon = O \big[ \big( h^{-1} (\Delta / L) \big)^2 \big]$, the difference $\big| \curv (X, f_{\sur}) - \curv (X, f_{\tar}) \big| \leq \Delta / 3$. 
This requires the sample complexity to be $\Omega (\text{poly} (\Delta / L) )$. 
\end{proof}

An immediate corollary of this theorem is that under the same conditions, $ \curv (X_{\tar}, f_{\sur}) \geq \curv (X_{\hum}, f_{\sur}) + \Delta / 3 $, which means the positive gap is still present once we replace the target model with the surrogate model.

\section{More Experimental Details}
\label{sec:Detail}
\paragraph{Datasets.} We utilize four datasets to evaluate the performance of our method including Xsum\citep{narayan-etal-2018-dont}, PubMedQA\citep{jin-etal-2019-pubmedqa}, WritingPrompts\citep{fan-etal-2018-hierarchical} and WTM-2016\citep{bojar-EtAl:2016:WMT1}. Xsum includes documents and corresponding extreme summarization of each news document. PubMedQA is composed of the abstracts of research papers and the corresponding research questions in biomedical research. WritingPrompts is a large dataset consisting of 300K human-written stories paired with writing prompts from an online forum. WTM-2016 is a translation dataset with English-German pairs.  For each dataset, we randomly choose 150 examples as human-written texts. Utilizing the 30 prefix tokens of human-written texts, we prompt the target closed-source models by calling API to generate the corresponding texts as the machine-generated text. Some details about prompting for each dataset can be found in Table ~\ref{tab:prompts}. We utilize the same prompt for different source models including GPT-3.5, GPT-4 and Claude-3. Our training datasets are collected from the open-source datasets, WildChat\citep{zhao2024wildchat} for GPT-3.5 and GPT-4. Dataset for Claude-3 is generated by calling API with the prompt from WildChat. For GPT-3.5 and GPT-4,  we filter the data by timesteps to exactly match the version of the source model and randomly select 5K samples as the training set to fine-tune the surrogate model.
\begin{table}[htbp]
    \centering
    \caption{Examples of prompts used in different datasets.}
    \newcommand{\systemprompt}{System: You are a Technical writer.}
    \newcommand{\userprompt}[1]{User: Please answer the question in about 50 words. \textit{Question}}
    \newcolumntype{Y}{>{\centering\arraybackslash}m{1.5cm}}
    \begin{tabularx}{\textwidth}{Y|>{\centering\arraybackslash}X}
        \toprule
         \textbf{Datasets} & \textbf{Prompts}  \\
        \hline
        \multirow{2}{=}{PubMedQA} & \systemprompt \\ \cline{2-2}
        & \userprompt{180-300} \\
        \hline
        \multirow{2}{=}{Xsum} & System: You are a News writer.  \\ \cline{2-2}
        & User: Please write an article with about 150 words starting exactly with: \textit{Prefix} \\
        \hline                  
        \multirow{2}{=}{Writing Prompt} & System: You are a Fiction writer. \\ \cline{2-2}
        & User: Please write an article with about 150 words starting exactly with:\textit{Prefix} \\
        \hline
        \multirow{2}{=}{German} & System: You are a writer. \\ \cline{2-2}
        & User: Please complete a passage in German with about 150 words, starting exactly with: \textit{Prefix} \\ 
        \bottomrule
    \end{tabularx}
    \vspace{0.3cm}
    \label{tab:prompts}
\end{table}

\paragraph{Baselines.} For training-based methods, we compare \sysname with GPT-2 detectors\citep{solaiman2019release} developed by OpenAI which build on RoBERTa-base/large\citep{liu2019roberta}. 
Additionally, we compare \sysname to established zero-shot methodologies, such as Likelihood (mean log probabilities), LogRank (average log of ranks in descending order by probabilities), Entropy (mean token entropy of the predictive distribution) \citep{gehrmann2019gltr,ippolito2019automatic}, and LRR (a fusion of log probability and log-rank) \citep{su2023detectllm}.

\paragraph{More Implementation Details.} We describe more implementation details of our framework. For LoRA hyper-parameters, we utilize 16 as the LoRA rank and set lora\_alpha as 32. Dropout is set as 0.05. For the Llama series, we adopt the LoRA module on $[``q\_proj", ``v\_proj", ``k\_proj", ``o\_proj", ``gate\_proj", ``down\_proj", ``up\_proj"]$ while in GPT-Neo models, we apply on $[``q\_proj", ``v\_proj", ``k\_proj", ``out\_proj", ``c\_fc", `` c\_proj"]$. For training hyperparameters, we set 512 as the max length for texts from GPT-4 and GPT-3.5 models while it is 2048  for texts from Claude-3. We finetune the surrogate model with a learning rate of 1e-4. The batch size is set as 1 per device with gradient accumulation per 4 steps. It's noteworthy that we do not choose our hyperparameters carefully and we believe it will achieve better results with careful tuning. The training costs less than 10 minutes with 4 A6000.

\section{Additional Experimental Results}
\subsection{Full Results on ChatGPT(GPT-3.5-Turbo)}
We provide the full result on ChatGPT(GPT-3.5-Turbo), as shown in Table ~\ref{tab:chatgpt}. Fast-DetectGPT obtains great performance on ChatGPT. Our method boosts their performance with a significant 100\% accuracy on Xsum and Writing. 

\begin{table}[t]
    \centering
    \caption{Full results of ChatGPT(GPT-3.5-Turbo) on PubMed, XSum and Writing as the complementary results to Table ~\ref{tab:main_table}.}
    
    \begin{tabular*}{7.5cm}{l|ccc}
    \hline
    \multirow{2}{*}{\textbf{Method}}  & \multicolumn{3}{c}{\textbf{GPT-3.5-Turbo}} \\
     & \textbf{PubMed} & \textbf{XSum}  & \textbf{Writing} \\ 
    
    \hline
    DNA-GPT &0.7788	&0.9673	&0.9829 \\
    Fast-DetectGPT &0.9309	&0.9994	&0.9967 \\
    \sysname &0.9853	&1.0000	&1.0000     \\
    \bottomrule
    \end{tabular*}
    \label{tab:chatgpt}
\end{table}

\subsection{Results on More Versions of Source Model}
\label{sec:one-for-all}
\begin{table}[t]
    \centering
    \caption{More results of our method on one-for-all settings. The surrogate model is trained with 5K samples generated from \texttt{GPT-4-0613} and evaluated on other versions of the model including GPT-3.5, GPT-4, GPT-4o and GPT-4o-Mini without further training.}
    \begin{tabular*}{12.2cm}{l|c|c|c}
    \toprule
    \multirow{2}{*}{\textbf{Method}} & \textbf{GPT-3.5-Turbo-0613} & \textbf{GPT-3.5-Turbo-1106} &\textbf{GPT-3.5-Turbo-0125} \\
    & \textbf{PubMed} & \textbf{PubMed}  & \textbf{PubMed} \\ 
    
    \hline
    \sysname &0.9866 &0.9911 &0.9858     \\
    
    \hline

    \end{tabular*}
    \setlength\tabcolsep{3.4pt}
    \vspace{1mm}
    
    \begin{tabular*}{14.0cm}{l|ccc|ccc|ccc}
    \hline
    \multirow{2}{*}{\textbf{Method}}  & \multicolumn{3}{c|}{\textbf{GPT-4-1106-Preview}} &\multicolumn{3}{c|}{\textbf{GPT-4-0125-Preview}} &\multicolumn{3}{c}{\textbf{GPT-4-0409-Preview}} \\
     & \textbf{PubMed} & \textbf{XSum}  & \textbf{Writing} & \textbf{PubMed} & \textbf{XSum}  & \textbf{Writing} & \textbf{PubMed} & \textbf{XSum}  & \textbf{Writing} \\ 
    
    \hline
    \sysname &0.9780 &0.9956 &0.9968 &0.9850 &0.9954 &0.9968 &0.9815 &0.9872 &0.9924    \\

    \bottomrule
    \end{tabular*}
    \vspace{1mm}
    
    \setlength\tabcolsep{3.4pt}
    \begin{tabular*}{9.5cm}{l|ccc|ccc}
    \hline
    \multirow{2}{*}{\textbf{Method}}  & \multicolumn{3}{c|}{\textbf{GPT-4o}} &\multicolumn{3}{c}{\textbf{GPT-4o-Mini}}  \\
     & \textbf{PubMed} & \textbf{XSum}  & \textbf{Writing} & \textbf{PubMed} & \textbf{XSum}  & \textbf{Writing} \\ 
    
    \hline
    \sysname &0.9877	&0.9965	&0.9994 &0.9857	&0.9976	&0.9992     \\

    \bottomrule
    \end{tabular*}
    \label{tab:ofa_appendix}
\end{table}

To show the one-for-all ability of our method, we provide more results on more source models using the surrogate model trained only on data generated from \texttt{GPT-4-0613}, as shown in Table ~\ref{tab:ofa_appendix}. We evaluate our method on different versions of GPT-3.5, GPT-4, GPT-4o and GPT-4o-Mini. In general, our method shows robustness on all versions of GPT-3.5 and GPT-4, achieving more than 98\% accuracy except on PubMed-GPT-4-1106-Preview but still more than 97\%. The superior performance in the one-for-all setting shows the generalizability of our methods, leading to further motivate to train a universal surrogate model for various source models regardless of model updates.

\subsection{Results on Code Detection}

We include a performance comparison of the coding task. We follow \citep{yang2023code} to apply APPS\citep{hendrycks2021measuring} dataset as the coding task. We sample 150 coding tasks from APPS and generate the coding results by calling GPT-4 API. Our method is only trained by the corpus generated by GPT-4 as we previously did. The results are in Table \ref{tab:coding}, Our method obtains significant improvement on the coding task.

\begin{table}[h]
    \centering
    \caption{Results of code detection on the APPS dataset.}
    \renewcommand{\arraystretch}{1.2}
    \begin{tabular}{l|c}
        \hline
        \textbf{Method}& \textbf{GPT-4-APPS} \\ \hline
        Fast-DetectGPT & 0.6836 \\ 
        \sysname & \textbf{0.9078} \\ \hline
    \end{tabular}
    \label{tab:coding}
\end{table}

\subsection{Results on Different Text Genres}
We include an evaluation of the detection performance of different text genres. Using a domain-specific datasets RAID, we select 1000 human texts and 1000 GPT-4 generated texts for each domain in RAID dataset and compare the evaluation results with FastDetectGPT. The results is shown in Table \ref{tab:text_genres}, which demonstrate that DALD performs admirably in diverse domains.

\begin{table}[h]
    \centering
    \caption{Results across different text genres using the RAID dataset.}
    \renewcommand{\arraystretch}{1.2}
    \begin{tabular}{l|c c c c c c}
        \hline
\textbf{Method}        & \textbf{Poetry} & \textbf{News} & \textbf{Abstract} & \textbf{Books} & \textbf{Recipes} & \textbf{Reddit} \\ \hline
        Fast-DetectGPT & 0.8553 & 0.9116 & 0.8600 & 0.9123 & 0.9116 & 0.9134 \\ 
        DALD & \textbf{0.9709} & \textbf{0.9567} & \textbf{0.9876} & \textbf{0.9675} & \textbf{0.9998} & \textbf{0.9862} \\ \hline
    \end{tabular}
    \label{tab:text_genres}
\end{table}

\subsection{Results on Different Test Data Sizes}
Our main experiment follow previous works such as DNA-GPT and FastDetectGPT and utilize the same amount of data for fair evaluation. Furthermore, we provide the evaluation results of our method on different test data sizes, as shown in Table \ref{tab:test_data_size}. It is observed that there is little difference in performance as the number of samples increases, indicating the robustness of our method to test data size.

\begin{table}[h]
\centering
\vspace{-4mm}
\caption{Results of ChatGPT and GPT-4 across different sample sizes on PubMed, XSum, and Writing datasets.}
\setlength{\tabcolsep}{6pt} 
\begin{tabular}{c|c|ccc}
    \toprule
\multirow{2}{*}{\textbf{Num. of Samples}} & \multicolumn{1}{c|}{\textbf{ChatGPT}} & \multicolumn{3}{c}{\textbf{GPT-4}} \\  
                         & \multicolumn{1}{c|}{\textbf{PubMed}} & \multicolumn{1}{c}{\textbf{PubMed}} & \multicolumn{1}{c}{\textbf{XSum}} & \multicolumn{1}{c}{\textbf{Writing}} \\ \hline
150                      & 0.9853 & 0.9785 & 0.9954 & 0.9980 \\
300                      & 0.9842 & 0.9821 & 0.9924 & 0.9980 \\
500                      & 0.9806 & 0.9828 & 0.9929 & 0.9974 \\
    \bottomrule
\end{tabular}
\vspace{-2mm}
\label{tab:test_data_size}
\end{table}

\subsection{Evaluation Metrics}
\label{sec:metric}

\paragraph{AUROC.} Following Fast-DetectGPT\citep{bao2023fast}, we compute the area under the receiver operating characteristic (AUROC) to measure the performance of different methods, evaluating the method performance with different classification thresholds. AUROC falls into the value range $[0, 1]$, which provides a quantitative measure of the likelihood that a randomly generated passage has a higher predicted probability of being machine-generated than a randomly written passage by a human. Generally, the value of 0.5 in AUROC indicates a random classifier without any classification ability while 1 in AUROC refers to a perfect classifier for all samples. The results comparison of our method and other baselines including training-based methods and zero-shot methods are provided in Table ~\ref{tab:main_table}.

\paragraph{AUPR.} Similar to AUROC, AUPR computes the area under precision and recall, which evaluates both the precision and recall of classifiers with different thresholds. AUPR also ranges from 0 to 1. A higher AUPR value represents a better classifier, which generally obtains higher recall on the condition that the precision is high. Therefore, we provide the AUPR comparison of our method with other baselines in Table ~\ref{tab:aupr}. We compare all baselines including training-based such as RoBERTa-base and zero-shot methods such as DetectGPT, DNA-GPT and Fast-DetectGPT. First of all, we can observe that the current mainstream zero-shot methods generally achieve better results compared with the training-based methods. Moreover, our method achieves the highest AUPR results (> 98\%) on all datasets and source models, further demonstrating the effectiveness of our method.

\begin{table}[h]
    \centering
    \caption{Performance comparison with baselines on AUPR. Similar to AUROC, our method achieves the best results compared with other training-based methods and zero-shot methods.}
    \begin{tabular*}{14cm}{l|c|ccc|ccc}
    \toprule
    \multirow{2}{*}{\textbf{Method}} & \textbf{ChatGPT} & \multicolumn{3}{c|}{\textbf{GPT-4}}& \multicolumn{3}{c}{\textbf{Claude-3}} \\
    & \textbf{PubMed} & \textbf{PubMed} & \textbf{XSum}  & \textbf{Writing} & \textbf{PubMed} & \textbf{XSum} & \textbf{Writing}  \\  \hline
    RoBERTa-base &0.6349 &0.5538 &0.7784 &0.5166 &0.5325 &0.8920 &0.7102 \\
    RoBERTa-large &0.7087 &0.5887 &0.7215 &0.4160 &0.5466 &0.8243 &0.5500\\
    \hline
    Likelihood &0.8551 &0.8453 &0.9379 &0.9594 &0.8453 &0.9379  &0.9594 \\
    Entropy &0.3699 &0.3743 &0.4172 &0.3382 &0.3382 &0.4172 &0.3382 \\
    LogRank &0.8811 &0.8415 &0.9415 &0.9504 &0.9504 &0.9415 &0.9504 \\
    LRR &0.7904 &0.6800 &0.7370 &0.7075 &0.7517 &0.9070 &0.8908 \\
    NPR &0.6917 &0.5793 &0.6099 &0.7777 &0.6124 &0.8621  &0.9314 \\
    Detect-GPT & 0.5862  & 0.6272   & 0.6940   & 0.6272  &  0.5982                    & 0.8551                     &0.9302                    \\
    DNA-GPT    & 0.7181  & 0.6566  & 0.6925  & 0.7610 & 0.6879  & 0.9452   & 0.9540               \\
    Fast-DetectGPT & 0.9323    & 0.8315   & 0.9193  & 0.9533     & 0.8955     & 0.9832    & 0.9595     \\ 
    \hline
     \sysname (Ours) &\textbf{0.9841} &\textbf{0.9802} &\textbf{0.9964} &\textbf{0.9989} &\textbf{0.9881} &\textbf{0.9991} &\textbf{0.9973} \\
    \bottomrule
    \end{tabular*}
    \vspace{0.3cm}
    \label{tab:aupr}
\end{table}

\section{Limitation \& Future work}
The limitation of our method is the absence of additional experiments conducted in multilingual settings. While our method has shown efficacy in English and German environments, its performance across different languages remains unexplored. Future studies should endeavor to evaluate the generalizability of our approach in multilingual contexts to ensure its applicability across diverse linguistic landscapes. 


\begin{table}[htbp]
    \centering
    \caption{Examples of data from WildChat and Claude-3-Opus-Instruct-15K.}
    \newcolumntype{Y}{>{\centering\arraybackslash}m{2.0cm}}
    \begin{tabularx}{\textwidth}{Y|>{\centering\arraybackslash}X}
        \hline
         \textbf{Datasets} & \textbf{Samples}  \\
        \hline
        \multirow{2}{=}{WildChat GPT-3.5-Turbo-0301} &Prompt: who is theseus in plutarch book? \\ \cline{2-2}
        & Theseus is a legendary hero and king of Athens in Plutarch's book. Plutarch presents him as a great leader and conqueror who united the scattered towns of Attica and established Athens as the political and cultural center of the region. Theseus is also famous for his defeat of the Minotaur, a monstrous creature with the head of a bull and the body of a man who lived in a labyrinth on the island of Crete. According to Plutarch, Theseus also instituted many reforms in Athens, such as the legal code and the Panathenaic festival. \\
        \hline
        \multirow{2}{=}{Claude-3-Opus-Instruct-15K} & Prompt: Explain how the human circulatory system functions, covering the structures and processes involved in transporting blood throughout the body. Include details about the major components of the system (heart, blood vessels), how blood is pumped through arteries and veins, and the importance of blood circulation for keeping tissues and organs functioning properly. \\ \cline{2-2}
        & Response: The human circulatory system is responsible for transporting blood throughout the body in order to deliver oxygen and nutrients to tissues and organs and remove metabolic waste products. The major components are: The Heart: A muscular organ that acts as a pump to propel blood through the blood vessels. It has four chambers - two atria and two ventricles. The right side of the heart pumps deoxygenated blood to the lungs, while the left side pumps oxygenated blood to the rest of the body. Arteries: Blood vessels that carry oxygenated blood away from the heart to the body's tissues. They have thick, muscular walls to withstand the high pressure of blood being pumped from the heart. The aorta is the largest artery. Veins: Blood vessels that carry deoxygenated blood from the tissues back to the heart. They have thinner walls and larger lumens than arteries. Veins have valves to prevent backflow of blood. The venae cavae are the largest veins. Capillaries: Microscopic blood vessels that form a network to facilitate the exchange of substances between the blood and tissues. Oxygen and nutrients diffuse from the blood into cells, while carbon dioxide and wastes move from cells into the blood. The pumping of the heart drives circulation: 1. Deoxygenated blood from the body enters the right atrium, flows into the right ventricle, and is pumped to the lungs via the pulmonary arteries. 2. In the lungs, blood picks up oxygen and releases CO2. Oxygenated blood returns to the left atrium via the pulmonary veins. 3. Blood flows into the left ventricle and is forcefully pumped out to the body through the aorta. 4. The aorta branches into smaller arteries, then arterioles, and finally capillaries where gas and nutrient exchange occurs. 5. Deoxygenated blood collects in venules, flows into veins, and returns to the right atrium via the venae cavae. The cycle repeats. Proper circulation is critical for survival. It ensures that all cells receive oxygen and nutrients needed to function. It enables removal of metabolic wastes that could be toxic if allowed to accumulate. Circulation also helps distribute hormones, transport immune cells, and regulate body temperature and pH. Disruption of blood flow, such as from a blockage, can quickly lead to tissue damage or death. \\
        \hline
       
        \hline
    \end{tabularx}
     \label{tab:temp_text}
\end{table}


\newpage
\section*{NeurIPS Paper Checklist}

\begin{enumerate}

\item {\bf Claims}
    \item[] Question: Do the main claims made in the abstract and introduction accurately reflect the paper's contributions and scope?
    \item[] Answer: \answerYes{} 
    \item[] Justification: Please check the result in Tables 1, 2, 3, 4, 5 and Figures 2, 5.
    \item[] Guidelines:
    \begin{itemize}
        \item The answer NA means that the abstract and introduction do not include the claims made in the paper.
        \item The abstract and/or introduction should clearly state the claims made, including the contributions made in the paper and important assumptions and limitations. A No or NA answer to this question will not be perceived well by the reviewers. 
        \item The claims made should match theoretical and experimental results, and reflect how much the results can be expected to generalize to other settings. 
        \item It is fine to include aspirational goals as motivation as long as it is clear that these goals are not attained by the paper. 
    \end{itemize}

\item {\bf Limitations}
    \item[] Question: Does the paper discuss the limitations of the work performed by the authors?
    \item[] Answer: \answerYes{} 
    \item[] Justification: Please check the Limitation \& Future Works section.
    \item[] Guidelines:
    \begin{itemize}
        \item The answer NA means that the paper has no limitation while the answer No means that the paper has limitations, but those are not discussed in the paper. 
        \item The authors are encouraged to create a separate "Limitations" section in their paper.
        \item The paper should point out any strong assumptions and how robust the results are to violations of these assumptions (e.g., independence assumptions, noiseless settings, model well-specification, asymptotic approximations only holding locally). The authors should reflect on how these assumptions might be violated in practice and what the implications would be.
        \item The authors should reflect on the scope of the claims made, e.g., if the approach was only tested on a few datasets or with a few runs. In general, empirical results often depend on implicit assumptions, which should be articulated.
        \item The authors should reflect on the factors that influence the performance of the approach. For example, a facial recognition algorithm may perform poorly when image resolution is low or images are taken in low lighting. Or a speech-to-text system might not be used reliably to provide closed captions for online lectures because it fails to handle technical jargon.
        \item The authors should discuss the computational efficiency of the proposed algorithms and how they scale with dataset size.
        \item If applicable, the authors should discuss possible limitations of their approach to address problems of privacy and fairness.
        \item While the authors might fear that complete honesty about limitations might be used by reviewers as grounds for rejection, a worse outcome might be that reviewers discover limitations that aren't acknowledged in the paper. The authors should use their best judgment and recognize that individual actions in favor of transparency play an important role in developing norms that preserve the integrity of the community. Reviewers will be specifically instructed to not penalize honesty concerning limitations.
    \end{itemize}

\item {\bf Theory Assumptions and Proofs}
    \item[] Question: For each theoretical result, does the paper provide the full set of assumptions and a complete (and correct) proof?
    \item[] Answer: \answerYes{} 
    \item[] Justification: Please check the Method in the main text and Theoretical Analysis in the Appendix.
    \item[] Guidelines:
    \begin{itemize}
        \item The answer NA means that the paper does not include theoretical results. 
        \item All the theorems, formulas, and proofs in the paper should be numbered and cross-referenced.
        \item All assumptions should be clearly stated or referenced in the statement of any theorems.
        \item The proofs can either appear in the main paper or the supplemental material, but if they appear in the supplemental material, the authors are encouraged to provide a short proof sketch to provide intuition. 
        \item Inversely, any informal proof provided in the core of the paper should be complemented by formal proofs provided in appendix or supplemental material.
        \item Theorems and Lemmas that the proof relies upon should be properly referenced. 
    \end{itemize}

    \item {\bf Experimental Result Reproducibility}
    \item[] Question: Does the paper fully disclose all the information needed to reproduce the main experimental results of the paper to the extent that it affects the main claims and/or conclusions of the paper (regardless of whether the code and data are provided or not)?
    \item[] Answer: \answerYes{} 
    \item[] Justification: Please check the dataset, evaluation metric and implementation detail in the main text and appendix.
    \item[] Guidelines:
    \begin{itemize}
        \item The answer NA means that the paper does not include experiments.
        \item If the paper includes experiments, a No answer to this question will not be perceived well by the reviewers: Making the paper reproducible is important, regardless of whether the code and data are provided or not.
        \item If the contribution is a dataset and/or model, the authors should describe the steps taken to make their results reproducible or verifiable. 
        \item Depending on the contribution, reproducibility can be accomplished in various ways. For example, if the contribution is a novel architecture, describing the architecture fully might suffice, or if the contribution is a specific model and empirical evaluation, it may be necessary to either make it possible for others to replicate the model with the same dataset, or provide access to the model. In general. releasing code and data is often one good way to accomplish this, but reproducibility can also be provided via detailed instructions for how to replicate the results, access to a hosted model (e.g., in the case of a large language model), releasing of a model checkpoint, or other means that are appropriate to the research performed.
        \item While NeurIPS does not require releasing code, the conference does require all submissions to provide some reasonable avenue for reproducibility, which may depend on the nature of the contribution. For example
        \begin{enumerate}
            \item If the contribution is primarily a new algorithm, the paper should make it clear how to reproduce that algorithm.
            \item If the contribution is primarily a new model architecture, the paper should describe the architecture clearly and fully.
            \item If the contribution is a new model (e.g., a large language model), then there should either be a way to access this model for reproducing the results or a way to reproduce the model (e.g., with an open-source dataset or instructions for how to construct the dataset).
            \item We recognize that reproducibility may be tricky in some cases, in which case authors are welcome to describe the particular way they provide for reproducibility. In the case of closed-source models, it may be that access to the model is limited in some way (e.g., to registered users), but it should be possible for other researchers to have some path to reproducing or verifying the results.
        \end{enumerate}
    \end{itemize}

\item {\bf Open access to data and code}
    \item[] Question: Does the paper provide open access to the data and code, with sufficient instructions to faithfully reproduce the main experimental results, as described in supplemental material?
    \item[] Answer: \answerYes{} 
    \item[] Justification: We will include our code in the supplemental material.
    \item[] Guidelines:
    \begin{itemize}
        \item The answer NA means that paper does not include experiments requiring code.
        \item Please see the NeurIPS code and data submission guidelines (\url{https://nips.cc/public/guides/CodeSubmissionPolicy}) for more details.
        \item While we encourage the release of code and data, we understand that this might not be possible, so “No” is an acceptable answer. Papers cannot be rejected simply for not including code, unless this is central to the contribution (e.g., for a new open-source benchmark).
        \item The instructions should contain the exact command and environment needed to run to reproduce the results. See the NeurIPS code and data submission guidelines (\url{https://nips.cc/public/guides/CodeSubmissionPolicy}) for more details.
        \item The authors should provide instructions on data access and preparation, including how to access the raw data, preprocessed data, intermediate data, and generated data, etc.
        \item The authors should provide scripts to reproduce all experimental results for the new proposed method and baselines. If only a subset of experiments are reproducible, they should state which ones are omitted from the script and why.
        \item At submission time, to preserve anonymity, the authors should release anonymized versions (if applicable).
        \item Providing as much information as possible in supplemental material (appended to the paper) is recommended, but including URLs to data and code is permitted.
    \end{itemize}

\item {\bf Experimental Setting/Details}
    \item[] Question: Does the paper specify all the training and test details (e.g., data splits, hyperparameters, how they were chosen, type of optimizer, etc.) necessary to understand the results?
    \item[] Answer: \answerYes{} 
    \item[] Justification: Please check each experimental section, we have a detailed description of our experiment settings.
    \item[] Guidelines:
    \begin{itemize}
        \item The answer NA means that the paper does not include experiments.
        \item The experimental setting should be presented in the core of the paper to a level of detail that is necessary to appreciate the results and make sense of them.
        \item The full details can be provided either with the code, in appendix, or as supplemental material.
    \end{itemize}

\item {\bf Experiment Statistical Significance}
    \item[] Question: Does the paper report error bars suitably and correctly defined or other appropriate information about the statistical significance of the experiments?
    \item[] Answer: \answerNo{} 
    \item[] Justification: 
    \item[] Guidelines:
    \begin{itemize}
        \item The answer NA means that the paper does not include experiments.
        \item The authors should answer "Yes" if the results are accompanied by error bars, confidence intervals, or statistical significance tests, at least for the experiments that support the main claims of the paper.
        \item The factors of variability that the error bars are capturing should be clearly stated (for example, train/test split, initialization, random drawing of some parameter, or overall run with given experimental conditions).
        \item The method for calculating the error bars should be explained (closed form formula, call to a library function, bootstrap, etc.)
        \item The assumptions made should be given (e.g., Normally distributed errors).
        \item It should be clear whether the error bar is the standard deviation or the standard error of the mean.
        \item It is OK to report 1-sigma error bars, but one should state it. The authors should preferably report a 2-sigma error bar than state that they have a 96\% CI, if the hypothesis of Normality of errors is not verified.
        \item For asymmetric distributions, the authors should be careful not to show in tables or figures symmetric error bars that would yield results that are out of range (e.g. negative error rates).
        \item If error bars are reported in tables or plots, The authors should explain in the text how they were calculated and reference the corresponding figures or tables in the text.
    \end{itemize}

\item {\bf Experiments Compute Resources}
    \item[] Question: For each experiment, does the paper provide sufficient information on the computer resources (type of compute workers, memory, time of execution) needed to reproduce the experiments?
    \item[] Answer: \answerYes{} 
    \item[] Justification: Please check the implementation section in the main text.
    \item[] Guidelines:
    \begin{itemize}
        \item The answer NA means that the paper does not include experiments.
        \item The paper should indicate the type of compute workers CPU or GPU, internal cluster, or cloud provider, including relevant memory and storage.
        \item The paper should provide the amount of compute required for each of the individual experimental runs as well as estimate the total compute. 
        \item The paper should disclose whether the full research project required more compute than the experiments reported in the paper (e.g., preliminary or failed experiments that didn't make it into the paper). 
    \end{itemize}
    
\item {\bf Code Of Ethics}
    \item[] Question: Does the research conducted in the paper conform, in every respect, with the NeurIPS Code of Ethics \url{https://neurips.cc/public/EthicsGuidelines}?
    \item[] Answer: \answerYes{}{} 
    \item[] Justification: 
    \item[] Guidelines:
    \begin{itemize}
        \item The answer NA means that the authors have not reviewed the NeurIPS Code of Ethics.
        \item If the authors answer No, they should explain the special circumstances that require a deviation from the Code of Ethics.
        \item The authors should make sure to preserve anonymity (e.g., if there is a special consideration due to laws or regulations in their jurisdiction).
    \end{itemize}

\item {\bf Broader Impacts}
    \item[] Question: Does the paper discuss both potential positive societal impacts and negative societal impacts of the work performed?
    \item[] Answer: \answerYes{} 
    \item[] Justification: Appendix.
    \item[] Guidelines:
    \begin{itemize}
        \item The answer NA means that there is no societal impact of the work performed.
        \item If the authors answer NA or No, they should explain why their work has no societal impact or why the paper does not address societal impact.
        \item Examples of negative societal impacts include potential malicious or unintended uses (e.g., disinformation, generating fake profiles, surveillance), fairness considerations (e.g., deployment of technologies that could make decisions that unfairly impact specific groups), privacy considerations, and security considerations.
        \item The conference expects that many papers will be foundational research and not tied to particular applications, let alone deployments. However, if there is a direct path to any negative applications, the authors should point it out. For example, it is legitimate to point out that an improvement in the quality of generative models could be used to generate deepfakes for disinformation. On the other hand, it is not needed to point out that a generic algorithm for optimizing neural networks could enable people to train models that generate Deepfakes faster.
        \item The authors should consider possible harms that could arise when the technology is being used as intended and functioning correctly, harms that could arise when the technology is being used as intended but gives incorrect results, and harms following from (intentional or unintentional) misuse of the technology.
        \item If there are negative societal impacts, the authors could also discuss possible mitigation strategies (e.g., gated release of models, providing defenses in addition to attacks, mechanisms for monitoring misuse, mechanisms to monitor how a system learns from feedback over time, improving the efficiency and accessibility of ML).
    \end{itemize}
    
\item {\bf Safeguards}
    \item[] Question: Does the paper describe safeguards that have been put in place for responsible release of data or models that have a high risk for misuse (e.g., pretrained language models, image generators, or scraped datasets)?
    \item[] Answer: \answerNA{} 
    \item[] Justification: 
    \item[] Guidelines:
    \begin{itemize}
        \item The answer NA means that the paper poses no such risks.
        \item Released models that have a high risk for misuse or dual-use should be released with necessary safeguards to allow for controlled use of the model, for example by requiring that users adhere to usage guidelines or restrictions to access the model or implementing safety filters. 
        \item Datasets that have been scraped from the Internet could pose safety risks. The authors should describe how they avoided releasing unsafe images.
        \item We recognize that providing effective safeguards is challenging, and many papers do not require this, but we encourage authors to take this into account and make a best faith effort.
    \end{itemize}

\item {\bf Licenses for existing assets}
    \item[] Question: Are the creators or original owners of assets (e.g., code, data, models), used in the paper, properly credited and are the license and terms of use explicitly mentioned and properly respected?
    \item[] Answer: \answerNA{} 
    \item[] Justification: We all use public resources in our paper.
    \item[] Guidelines:
    \begin{itemize}
        \item The answer NA means that the paper does not use existing assets.
        \item The authors should cite the original paper that produced the code package or dataset.
        \item The authors should state which version of the asset is used and, if possible, include a URL.
        \item The name of the license (e.g., CC-BY 4.0) should be included for each asset.
        \item For scraped data from a particular source (e.g., website), the copyright and terms of service of that source should be provided.
        \item If assets are released, the license, copyright information, and terms of use in the package should be provided. For popular datasets, \url{paperswithcode.com/datasets} has curated licenses for some datasets. Their licensing guide can help determine the license of a dataset.
        \item For existing datasets that are re-packaged, both the original license and the license of the derived asset (if it has changed) should be provided.
        \item If this information is not available online, the authors are encouraged to reach out to the asset's creators.
    \end{itemize}

\item {\bf New Assets}
    \item[] Question: Are new assets introduced in the paper well documented and is the documentation provided alongside the assets?
    \item[] Answer: \answerYes{} 
    \item[] Justification: We will release the weight in our experiments.
    \item[] Guidelines:
    \begin{itemize}
        \item The answer NA means that the paper does not release new assets.
        \item Researchers should communicate the details of the dataset/code/model as part of their submissions via structured templates. This includes details about training, license, limitations, etc. 
        \item The paper should discuss whether and how consent was obtained from people whose asset is used.
        \item At submission time, remember to anonymize your assets (if applicable). You can either create an anonymized URL or include an anonymized zip file.
    \end{itemize}

\item {\bf Crowdsourcing and Research with Human Subjects}
    \item[] Question: For crowdsourcing experiments and research with human subjects, does the paper include the full text of instructions given to participants and screenshots, if applicable, as well as details about compensation (if any)? 
    \item[] Answer: \answerNA{} 
    \item[] Justification: 
    \item[] Guidelines:
    \begin{itemize}
        \item The answer NA means that the paper does not involve crowdsourcing nor research with human subjects.
        \item Including this information in the supplemental material is fine, but if the main contribution of the paper involves human subjects, then as much detail as possible should be included in the main paper. 
        \item According to the NeurIPS Code of Ethics, workers involved in data collection, curation, or other labor should be paid at least the minimum wage in the country of the data collector. 
    \end{itemize}

\item {\bf Institutional Review Board (IRB) Approvals or Equivalent for Research with Human Subjects}
    \item[] Question: Does the paper describe potential risks incurred by study participants, whether such risks were disclosed to the subjects, and whether Institutional Review Board (IRB) approvals (or an equivalent approval/review based on the requirements of your country or institution) were obtained?
    \item[] Answer: \answerNA{} 
    \item[] Justification: 
    \item[] Guidelines:
    \begin{itemize}
        \item The answer NA means that the paper does not involve crowdsourcing nor research with human subjects.
        \item Depending on the country in which research is conducted, IRB approval (or equivalent) may be required for any human subjects research. If you obtained IRB approval, you should clearly state this in the paper. 
        \item We recognize that the procedures for this may vary significantly between institutions and locations, and we expect authors to adhere to the NeurIPS Code of Ethics and the guidelines for their institution. 
        \item For initial submissions, do not include any information that would break anonymity (if applicable), such as the institution conducting the review.
    \end{itemize}

\end{enumerate}

\end{document}